\let\NAT@parse\undefined
\newtheorem{theorem}{Theorem}[section]
\newtheorem{lemma}[theorem]{Lemma}
\newtheorem{remark}[theorem]{Remark}
\newcommand{\eps}{\varepsilon}
\newcommand{\Span}{\operatorname{span}}
\newcommand{\argmin}{\operatornamewithlimits{argmin}}
\newcommand{\atoz}{\text{as } t \to 0}
\newcommand{\p}[1]{\mbox{$\mathbb{P}\left[#1\right]$}}
\newcommand{\x}{\mathbf{x}}
\newcommand{\y}{\mathbf{y}}
\newcommand{\z}{\mathbf{z}}
\newcommand{\hv}{\mathbf{\hat v}}
\newcommand{\T}{\tau}
\newcommand{\C}{\mathcal{C}}
\renewcommand{\b}{\mathbf{b}}
\renewcommand{\c}{\mathbf{c}}
\renewcommand{\u}{\mathbf{u}}
\renewcommand{\v}{\mathbf{v}}
\newcommand{\xb}{\bar\x}
\newcommand{\dx}{\boldsymbol{\delta}\x}
\newcommand{\Gnorm}[2]{\left\|#1\right\|_{G[#2]^{-1}}}
\newcommand{\M}{\mathcal{M}}
\newcommand{\U}{\mathcal{U}}
\newcommand{\Mobs}{\mathcal{M}_{\text{obs}}}
\newcommand{\Mfree}{\mathcal{M}_{\text{free}}}
\newcommand{\Mgoal}{\mathcal{M}_{\text{goal}}}
\newcommand{\cl}{\operatorname{cl}}
\newcommand{\SF}{\texttt{SampleFree}}
\newcommand{\R}{\mathbb{R}}
\newcommand{\N}{\mathbb{N}}
\newcommand{\0}{\mathbf{0}}
\newcommand{\xinit}{\x_{\text{init}}}
\newcommand{\Nti}{N\to\infty}
\newcommand{\DFMT}{DFMT$^*$\xspace}
\newcommand{\DPRM}{DPRM$^*$\xspace}
\newcommand{\FMT}{FMT$^*$\xspace}
\newcommand{\PRM}{PRM$^*$\xspace}
\newcommand{\RRTs}{RRT$^*$\xspace}
\newcommand{\tD}{{\tilde D}}
\newcommand*\dif{\mathop{}\!\mathrm{d}}
\newcommand{\ssmargin}[2]{{#1}}
\title{Optimal Sampling-Based Motion Planning under Differential Constraints: the Drift Case with Linear Affine Dynamics \vspace{-3mm}}
\author{Edward Schmerling
    \thanks{Edward Schmerling is with the Institute for Computational \& Mathematical \ Engineering, Stanford University, Stanford, CA 94305, \texttt{schmrlng@stanford.edu}.}
    \and Lucas Janson
    \thanks{Lucas Janson is with the Department  of Statistics, Stanford University, Stanford, CA 94305, \texttt{ljanson@stanford.edu}.}
    \thanks{Marco Pavone is with the Department\ of Aeronautics and Astronautics, Stanford University, Stanford, CA 94305, \texttt{pavone@stanford.edu}.}
    \and Marco Pavone%
    \thanks{This work was supported by an Early Career Faculty grant from NASA's Space Technology Research Grants Program (Grant NNX12AQ43G).}
}
\begin{document}

\maketitle

\begin{abstract}
In this paper we provide a thorough, rigorous theoretical framework to assess optimality guarantees of sampling-based algorithms for drift control systems: systems that, loosely speaking, can not stop instantaneously due to momentum. We exploit this framework to design and analyze a sampling-based algorithm (the Differential Fast Marching Tree algorithm) that is asymptotically optimal, that is, it is guaranteed to converge, as the number of samples increases, to an optimal solution. In addition, our approach allows us to provide concrete bounds on the rate of this convergence. \ssmargin{The focus of this paper is on mixed time/control energy cost functions and on linear affine dynamical systems, which encompass a range of models of interest to applications (e.g., double-integrators) and represent a necessary step to design, via successive linearization, sampling-based and provably-correct algorithms for non-linear drift control systems}{too long}. Our analysis relies on an original perturbation analysis for two-point boundary value problems, which could be of independent interest.

\end{abstract}

\vspace{-0.1truecm}
\section{Introduction}\label{sec:intro}
A key problem in robotics is how to compute an obstacle-free and dynamically-feasible trajectory that a robot can execute \cite{SML:06}. The problem, in  the simplest setting where the robot does not have kinematic/dynamical (in short, differential) constraints on its motion and the problem becomes one of finding an obstacle-free ``geometric" path, is reasonably well-understood and sound algorithms exist for most practical scenarios. However, robotic systems \emph{do} have differential constraints (e.g., momentum), which most often cannot be neglected. Despite the long history of robotic motion planning, the inclusion of differential constraints in the planning process is currently considered an open challenge \cite{SML:11b}, in particular with respect to guarantees on the quality of the obtained solution and class of dynamical systems that can be addressed. Arguably, the most common approach in this regard is a decoupling approach, whereby the problem is decomposed in steps of computing a collision-free path (neglecting the differential constraints), smoothing the path to satisfy the motion constraints, and finally reparameterizing the trajectory so that the robot can execute it \cite{SML:11b}. This approach, while oftentimes fairly computationally efficient, presents a number of disadvantages, including computation of trajectories whose cost (e.g., length or control effort) is far from the theoretical optimum or even failure in finding any solution trajectory due to the decoupling scheme itself  \cite{SML:11b}. For these reasons, it has been advocated that there is a need for planning algorithms that solve the differentially-constrained motion planning problem (henceforth referred to as the DMP problem) \emph{in one shot}, i.e., without decoupling.

Broadly speaking, the DMP problem can be divided into two categories: (i) DMP for driftless systems, and (ii) DMP for drift systems. Intuitively, systems with drift constraints are systems where from some states it is impossible to stop instantaneously (this is typically due to momentum). More rigorously, a system $\dot{\x} = f(\x, \u)$ is a drift system if for some state $\x$ there does not exist any admissible control $\u$ such that $f(\x, \u) = 0$  \cite{SML:06}. For example the basic, yet representative, double integrator system $\ddot{\x} = \u$ (modeling the motion of a point mass under controlled acceleration) is a drift system. From a planning perspective, DMP for drift systems is more challenging than its driftless counterpart, due, for example, to the inherent lack of symmetries in the dynamics and the presence of regions of inevitable collision (that is, sets of states from which obstacle collision will eventually occur, regardless of applied controls) \cite{SML:06}.

To date, the state of the art for one-shot solutions to the DMP problem (both for driftless and drift systems) is represented by sampling-based techniques, whereby an explicit construction of the configuration space is avoided and the configuration space is probabilistically ``probed" with a sampling scheme \cite{SML:06}. Arguably, the most successful algorithm for DMP to date is the rapidly-exploring random tree algorithm (RRT) \cite{SML-JJK:01}, which incrementally builds a tree of trajectories by randomly sampling points in the configuration space. However, the RRT algorithm lacks optimality guarantees, in the sense that one can prove that the cost of the solution returned by RRT converges to a suboptimal cost as the number of sampled points goes to infinity, almost surely \cite{SK-EF:11}. An asymptotically-optimal version of RRT for the geometric (i.e., without differential constraints) case has been recently presented in \cite{SK-EF:11}. This version, named \RRTs, essentially adds a rewiring stage to the RRT algorithm to counteract its greediness in exploring the configuration space. Prompted by this result, a number of works have proposed extensions of \RRTs to the DMP problem  \cite{SK-EF:10,AP-RP-GK-LK-TLP:12,GG-AP-RP-GK:13,SK-EF:13,DW-JvdB:13}, with the goal of retaining the asymptotic optimality property of \RRTs. Care must be taken in arguing optimality for drift systems in particular, as the control asymmetry requires a consideration of both forward-reachable and backward-reachable trajectory approximations.
Even in the driftless case, the matter of assessing optimality is quite subtle, and hinges upon a careful characterization of  a system's locally reachable sets in order to ensure that a planning algorithm examines ``enough volume'' in its operation, and thus enough sample points, to ensure asymptotic optimality \cite{ES-LJ-MP:15a}.
Another approach to asymptotically optimal DMP planning is given by STABLE SPARSE RRT which achieves optimality through random control propagation instead of connecting sampled points using a steering subroutine \cite{YL-ZL-KB:14}. This paper, like the \RRTs variations, is based on a steering function, although it may be considered less general, as it is our view that leveraging as much knowledge as possible of the differential constraints while planning is necessary for the goal of planning in real-time. In our related work \cite{ES-LJ-MP:15a} we provide a theoretical framework to study optimality guarantees of sampling-based algorithms for the DMP problem by focusing 
  on \emph{driftless} control-affine dynamical systems of the form
$\dot \x(t) = \sum_{i=1}^m g_i(\x(t)) \u_i(t)$. While this model is representative for a large class of robotic systems (e.g., mobile robots with wheels that roll without slipping and multi-fingered robotic hands), it is of limited applicability  in problems where momentum (i.e., drift) is a key feature of the problem setup (e.g., for a spacecraft or a helicopter).  

\emph{Statement of Contributions}: The objective of this paper is to provide a theoretical framework to study optimality guarantees of sampling-based algorithms for the DMP problem with \emph{drift}. Specifically, as in \cite{DW-JvdB:13}, we focus on linear affine systems of the form
\begin{equation*}
\dot \x[t] = A \x[t] + B \u[t] + \c, \quad \x[t] \in \M,\, \u[t] \in \U,
\end{equation*}
where $ \M$ and $\U$ are the configuration and control spaces, respectively, and it is of interest to find an obstacle-free trajectory $\pi$ that minimizes the  mixed time/energy criterion
\begin{equation*}
c[\pi] = \int_{0}^T \left(1 + \u[t]^T R \u[t]\right)\dif t,
\end{equation*}
where $R$ is a positive definite matrix that weights control energy expenditure versus traversal time. Henceforth, we will refer to a DMP problem involving linear affine dynamics  and a mixed time/energy cost criterion  as Linear Quadratic DMP (LQDMP). The LQDMP problem is relevant to applications for two main reasons: (i) it models the ``essential" features of a number of robotic systems (e.g., spacecraft in deep space, helicopters, or even ground vehicles), and (ii) its theoretical study forms the backbone for sampling-based approaches that rely on linearization of more complex underlying dynamics. From a theoretical and algorithmic standpoint, the LQDMP problem  presents two challenging features: (i) dynamics are not symmetric \cite{SML:06}, which makes forward and backward reachable sets different and requires a more sophisticated analysis of sampling volumes to prove asymptotic optimality, and (ii) not all directions of motion are equivalent, in the sense that some motions incur dramatically higher cost than others due to the algebraic structure of the constraints. Indeed, these are the very same challenges that make the  DMP problem with drift difficult in the first place, and they make approximation arguments (e.g., those needed to prove asymptotic optimality) more involved. Fortunately, for LQDMP an explicit characterization for the optimal trajectory connecting two sampled points in the absence of obstacles is available, which provides a foothold to begin the analysis. Specifically, the contribution of this paper is threefold. First, we show that \emph{any} trajectory in an LQDMP problem may be ``traced" arbitrarily well, with high probability, by connecting randomly distributed points from a sufficiently large sample set covering the configuration space. We will refer to this property as \emph{probabilistic exhaustivity}, as opposed to probabilistic completeness \cite{SML:06}, where the requirement is that \emph{at least} one trajectory is traced with a sufficiently large sample set. Second, we introduce a sampling-based algorithm for solving the LQDMP problem, namely the Differential Fast Marching Tree algorithm (\DFMT), whose design is enabled by our analysis of the notion of probabilistic exhaustivity. In particular, we are able to give a precise characterization of neighborhood radius, an important parameter for many asymptotically optimal motion planners, in contrast with previous work on LQDMP \cite{DW-JvdB:13}. Third, by leveraging probabilistic exhaustivity, we show that \DFMT is asymptotically optimal. This analysis framework builds upon \cite{ES-LJ-MP:15a}, and elements of our approach are inspired by \cite{DW-JvdB:13}. We note that in \cite{DW-JvdB:13}, the authors present an excellent extension of \RRTs that successfully solves the LQDMP problem in simulations, even when extended to linearized systems. The asymptotic optimality claim, however, relies only on a near-neighbor set size argument: we aim to put the analysis of the LQDMP problem on more rigorous theoretical footing.

\emph{Organization}: This paper is structured as follows.  In Section \ref{sec:probformulation} we formally define the DMP problem we wish to solve. In Section 
\ref{sec:optcontrol} we review known results about the problem of optimally connecting fixed initial and terminal states under linear affine dynamics with a quadratic cost function. Furthermore, we provide a simple, yet novel (to the best of our knowledge) asymptotic characterization of the spectrum of the weighted controllability Gramian, which is instrumental to our analysis. In Section \ref{sec:exhaust} we prove the aforementioned probabilistic exhaustivity property for drift  systems with linear affine dynamics. In Section \ref{sec:algo} we present the  \DFMT algorithm, and in Section \ref{sec:AO} we discuss its asymptotic optimality (together with a convergence rate characterization). Section \ref{sec:sims} contains proof-of-concept simulations. Finally, in Section \ref{sec:disc} we discuss several features of our analysis, we draw some conclusions, and we discuss directions for future work.

\section{Problem Formulation}\label{sec:probformulation}
Let $\M \subseteq \R^n$ and $\U \subseteq \R^m$ be the configuration space and control space, respectively, of a robotic system. Within this space let us assume the dynamics of the robot are given by the linear affine system
\begin{equation}\label{eqn:linsys}
\dot \x[t] = A \x[t] + B \u[t] + \c, \quad \x[t] \in \M,\, \u[t] \in \U,
\end{equation}
where $A \in \R^{n\times n}$, $B \in \R^{n\times m}$, and $\c \in \R^n$ are constants.

A tuple $\pi=(\x[],\u[],T)$ defines a \emph{dynamically feasible} trajectory, alternatively path, if the state evolution $\x:[0,T]\rightarrow \M$ and control input $\u:[0,T]\rightarrow\U$ satisfy equation~\eqref{eqn:linsys} for all $t \in [0,T]$.
We define the cost of a trajectory $\pi$ by the function
\begin{equation}\label{eqn:cost}
c[\pi] = \int_{0}^T \left(1 + \u[t]^T R \u[t]\right)\dif t
\end{equation}
where $R \in \R^{m\times m}$ is symmetric positive definite, constant, and given. We may rewrite this cost function as $c[\pi]=T + c_u[\pi]\nonumber$, where $c_u[\pi] = \int_{0}^T \u[t]^T R \u[t]\dif t$, with the interpretation that this cost function penalizes both trajectory duration $T$ and control effort $c_u$. The matrix $R$ determines the relative costs of the control inputs, as well as their costs relative to the duration of the trajectory. We denote this linear affine dynamical system with cost by $\Sigma = (A, B, \c, R)$.

Let $\Mobs \subset \M$ be the obstacle region within the configuration space and consider the closed obstacle-free space $\Mfree = \cl[\M \setminus \Mobs]$. The starting configuration $\xinit$ is an element of $\Mfree$, and the goal region $\Mgoal$ is an open subset of $\Mfree$.  The  trajectory planning problem is denoted by the tuple $(\Sigma, \Mfree, \xinit, \Mgoal)$. A dynamically feasible trajectory $\pi=(\x,\u,T)$ is \emph{collision-free} if $\x[t] \in \Mfree$ for all $t\in [0,T]$. A trajectory $\pi$ is said to be \emph{feasible} for the trajectory planning problem $(\Sigma, \Mfree, \xinit, \Mgoal)$ if it is dynamically feasible, collision-free, $\x[0] = \xinit$, and $\x[T] \in \Mgoal$.

Let $\Pi$ be the set of all feasible paths. The objective is to find the feasible path with minimum associated cost. We define the optimal trajectory planning problem as follows:

\begin{quote}{\bf LQDMP problem}: 
Given a  trajectory planning problem $(\Sigma, \Mfree, \xinit, \Mgoal)$ with cost function $c:~\Pi \to \R_{\geq 0}$ given by equation~\eqref{eqn:cost}, find a feasible path $\pi^{*}$ such that $c[\pi^{*}]= \inf\{c[\pi] \mid \pi \text{ is feasible}\}$. If no such path exists, report failure.
\end{quote}
Our analysis will rely on two key sets of assumptions, relating, respectively, to the system $\Sigma$ and the problem-specific parameters $\Mfree, \xinit, \Mgoal$.

\emph{Assumptions on system}: We assume that the system $\Sigma$ is controllable, (i.e., the pair $(A,B)$ is controllable) \cite{CTC:95} so that even disregarding obstacles there exist dynamically feasible trajectories between states.\footnote{This system controllability assumption is why we do not fold the constant drift term $\c$ into the state $\x$.} Also, we assume that the control space is unconstrained, i.e. $\U = \R^m$, and that the cost weight matrix $R$ is symmetric positive definite, so that every control direction has positive cost. These assumptions will be collectively referred to as $A_\Sigma$.

\emph{Assumptions on problem parameters}: We require that the configuration space is a compact subset of $\R^n$ so that we may sample from it. Furthermore, we require that the goal region $\Mgoal$ has \emph{regular boundary}, that is there exists $ \xi > 0$ such that for almost all $\y \in \partial \Mgoal$, there exists $\z \in \Mgoal$ with $B[\z, \xi] \subseteq \Mgoal$ and $\y \in \partial B[\z, \xi]$, where $B$ denotes the Euclidean 2-norm ball. This requirement that the boundary of the goal region has bounded curvature almost everywhere ensures that a sampling procedure may expect to select points in the goal region near any point on the region's boundary.
We make requirements on the \emph{clearance} of the optimal trajectory, i.e.,  its ``distance" from $\Mobs$ \cite{ES-LJ-MP:15a}. For a given $\delta>0$, the $\delta$-interior of $\Mfree$ is the set of all states that are at least a Euclidean distance $\delta$ away from any point in $\Mobs$. A collision-free path $\pi$ is said to have strong $\delta$-clearance if its state trajectory $\x$ lies entirely inside the $\delta$-interior of $\Mfree$.
A collision-free path $\pi$ is said to have weak $\delta$-clearance if there exists a path $\pi'$ that has strong $\delta$-clearance and there exists a homotopy $\psi$, with $\psi[0]= \pi$ and $\psi[1] = \pi'$ that satisfies the following three properties: (a) $\psi[\alpha]$ is a dynamically feasible
trajectory for all $ \alpha \in (0, 1]$, (b) $\lim_{\alpha\rightarrow0} c[\psi[\alpha]] = c[\pi]$, and (c) for all $ \alpha \in (0, 1]$ there exists $\delta_{\alpha}>0$ such that $\psi[\alpha]$ has strong $\delta_{\alpha}$-clearance. Properties (a) and (b) are required since pathological obstacle sets may be constructed that squeeze all optimum-approximating homotopies into undesirable motion. In practice, however, as long as $\Mfree$ does not contain any passages of infinitesimal width, the fact that $\Sigma$ is controllable will allow every trajectory to be weak $\delta$-clear. We claim that these assumptions about the problem parameters are mild, and can be regarded as ``minimum'' regularity assumptions.

All trajectories discussed in this paper are dynamically feasible unless otherwise noted.
The symbol $\|\cdot\|$ denotes the 2-norm, induced or otherwise.
The asymptotic notations $O, \Omega, \Theta, o$ mean bounded above, bounded below, bounded both above and below, and asymptotically dominated, respectively.

\section{Optimal Control in the Absence of Obstacles}\label{sec:optcontrol}

The goal of this section is twofold: to review results about  two-point boundary value problems for linear affine systems, and to present a simple, yet novel asymptotic characterization of the spectrum of the controllability Gramian. Both results will be instrumental to our analysis of LQDMP.
\vspace{-.1cm}
\subsection{Two Point Boundary Value Problem}
The material in this section is standard, we provide it to make the paper self-contained. Our presentation follows the treatment in  \cite{FL-VS:95,DW-JvdB:13}. Specifically, this section is concerned with local steering between states in the absence of environment boundaries and obstacles. Given a start state $\x_0 \in \M$ and an end state $x_1 \in \M$, the \emph{two point boundary value problem} (2BVP) is to find a trajectory $\pi = (\x, \u, \T^*)$ between $\x[0] = \x_0$ and $\x[\T^*] = \x_1$ that satisfies the system $\Sigma$ and minimizes its cost function~\eqref{eqn:cost}. Denote this trajectory and its cost as $\pi^*[\x_0, \x_1]$ and $c^*[\x_0, \x_1]$ respectively:
\begin{align*}
\pi^*[\x_0, \x_1] &= \argmin\{\pi \mid\x[0] = \x_0 \wedge \x[\T^*] = \x_1\}\ c[\pi]\\
c^*[\x_0, \x_1]   &= \min\{\pi \mid\x[0] = \x_0 \wedge \x[\T^*] = \x_1\}\ c[\pi].
\end{align*}

Let us define the weighted controllability Gramian $G[t]$ as the solution of the Lyapunov equation
\[
\dot G[t] = AG[t]+G[t]A^T+BR^{-1}B^T, \quad G[0] = 0,
\]
which has the closed form expression
\begin{equation}\label{eqn:G}
G[t] = \int_0^t \exp[As]BR^{-1}B^T\exp[A^Ts]\dif s.
\end{equation}
Under the assumptions $A_{\Sigma}$ (in particular, system~\eqref{eqn:linsys} is controllable), we have that $G[t]$ is symmetric positive definite for all $t > 0$. This fact allows us to define the weighted norm $\|\cdot\|_{G^{-1}}$ for $\x \in \R^n$:
\[
\Gnorm{\x}{t} = \sqrt{\x^T G[t]^{-1} \x}.
\]

Let $\bar \x[t]$ be the zero input response of system~\eqref{eqn:linsys}, that is the solution of the differential equation
\[
\dot {\bar \x}[t] = A \bar \x[t] + \c, \quad \bar \x[0] = \x_0,
\]
which has the closed form expression
\begin{equation}\label{eqn:xbar}
\bar \x[t] = \exp[At]\x_0 + \int_0^t \exp[As]\c\dif s.
\end{equation}
Then for a fixed arrival time $\T$ the optimal control policy for the fixed-time 2BVP is given by \cite{FL-VS:95}:
\begin{equation}\label{eqn:optu}
\u[t] = R^{-1}B^T\exp[A^T(\T-t)]G[\T]^{-1}(\x_1-\bar \x[\T]),
\end{equation}
which corresponds to the minimal cost (as a function of travel time $\T$)
\begin{equation}\label{eqn:optc}
c[\T] = \T + \Gnorm{\x_1-\bar \x[\T]}{\T}^2.
\end{equation}
The optimal connection time $\T^*$ may be computed by minimizing \eqref{eqn:optc} over $\T$. The state trajectory $\x[t]$ that evolves from this control policy may be computed explicitly as:
\begin{equation}\label{eqn:exptraj}
\x[t] = \bar \x[t] + G[t]\exp[A^T(\T^*-t)]G[\T^*]^{-1}(\x_1-\bar \x[\T^*]).
\end{equation}
Let $\pi^*[\x_0, \x_1, ..., \x_K]$ denote the concatenation of the trajectories $\pi^*[\x_k, \x_{k+1}]$ between successive states $\x_0, \x_1, ..., \x_K \in \M$.
\vspace{-.1cm}
\subsection{Small-Time Characterization of the Spectrum of the Controllability Gramian}

We begin by briefly reviewing the concept of \emph{controllability indices}\footnote{See \cite[p. 431]{TK:80} or \cite[p. 150]{CTC:95} for a more detailed treatment.} for a controllable system $(A,B)$. Let $\b_k$ denote the $k$th column of $B$. Consider searching the columns of the controllability matrix
$
\C[A,B] = \begin{bmatrix} B\enspace AB\enspace \cdots\enspace A^{n-1}B\end{bmatrix}
$
from left to right for a set of $n$ linearly independent vectors. This process is well-defined for a controllable pair $(A,B)$ since $\text{rank}\left[\C[A,B]\right]=n$. The resulting set
$
\mathcal{S} = \left\{\b_1, A\b_1, \dots, A^{\nu_1 - 1}\b_1, \b_2, \dots, A^{\nu_2 - 1}\b_2, \dots, A^{\nu_m - 1}\b_m \right\}
$
defines the controllability indices $\left\{\nu_1,\dots,\nu_m\right\}$ where $\sum_{k=1}^m \nu_k = n$ and $\nu = \max_k \nu_k$ is called the \emph{controllability index} of $(A,B)$. The $\nu_k$ give a fundamental notion of how difficult a system is to control in various directions; indeed these indices are a property of the system invariant with respect to similarity transformation, e.g. permuting the columns of $B$. We may also label the vectors of $\mathcal{S}$ as $\v_1, \dots, \v_n$ in the order that they come up in $\C[A,B]$. That is, $\v_i = A^{e_i}\b_{k_i}$ and $e_i \leq e_j$ iff $i \leq j$ (note: $e_n = \nu-1$). Let $\hv_1, \dots, \hv_n$ be an orthogonalization of the $\v_i$'s so that $V = \hat V X$ where $V$ and $\hat V$ have the $\v_i$'s and $\hv_i$'s as columns respectively, and $X$ is upper triangular.

\begin{lemma}[Small-Time Gramian Asymptotics]\label{lem:STGA}
Let the eigenvalues of $G[t]$ be $\lambda_1[t] \geq \lambda_2[t] \geq \dots \geq \lambda_n[t]$. Then $\lambda_i[t] = \Theta[t^{2e_i+1}]$ as $t \rightarrow 0$ for $i = 1,\dots,n$.
\end{lemma}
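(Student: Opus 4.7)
My plan is to conjugate $G[t]$ by the orthogonal matrix $\hat V$ to obtain $\tilde G[t] := \hat V^T G[t] \hat V$ (which shares all eigenvalues with $G[t]$), Taylor-expand each entry of $\tilde G[t]$ in small $t$, and conclude by sandwiching $\tilde G[t]$ between two scalar multiples of a diagonal matrix whose sorted eigenvalues are $\{t^{2e_i+1}\}$. The key structural observation is that for any $p < e_j$, every column $A^p \b_k$ of $A^p B$ appears strictly to the left of $\v_j$ in the left-to-right search of $\C[A,B]$ and therefore lies in $\Span(\v_1,\dots,\v_{j-1})$, which is orthogonal to $\hv_j$; hence $\hv_j^T A^p B = 0$ for $p < e_j$, while $r_j := \tfrac{1}{e_j!}\hv_j^T A^{e_j} B$ is a nonzero row vector (its $k_j$th entry equals $X_{jj}/e_j! \ne 0$, since the upper-triangular $X$ is invertible). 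Expanding $e^{As}$ and integrating then gives the leading-order expansion
\[
\tilde G_{jk}[t] = \alpha_{jk}\, t^{e_j + e_k + 1} + O\!\left(t^{e_j + e_k + 2}\right), \qquad \alpha_{jk} := \frac{r_j R^{-1} r_k^T}{e_j + e_k + 1}.
\]

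The hard step is showing $\alpha \succ 0$. Noting that $\alpha_{jk} = \int_0^1 \phi_j(s)^T \phi_k(s)\,\dif s$ for $\phi_j(s) := s^{e_j} R^{-1/2} r_j^T$, the matrix $\alpha$ is a Gram matrix, and $\alpha \succ 0$ iff the $\phi_j$ are linearly independent in $L^2([0,1];\R^m)$. Since monomials of distinct degrees are linearly independent, this reduces, within each level $I_\eta := \{j : e_j = \eta\}$, to linear independence of the rows $\{\hv_j^T A^\eta B\}_{j \in I_\eta}$. To verify this, I would suppose $u := \sum_{j \in I_\eta} c_j \hv_j$ satisfies $u^T A^\eta \b_k = 0$ for all $k$. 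Using the left-to-right selection rule (every non-selected $A^\eta \b_k$ is dependent on vectors $\v_l$ with either $e_l < \eta$ or $e_l = \eta$ and $k_l < k$), one can show that the projection of $\Span(A^\eta \b_1, \ldots, A^\eta \b_m)$ onto $\Span(\v_l : e_l < \eta)^\perp$ equals exactly $\Span(\hv_j : j \in I_\eta)$; since $u$ lies in this projected span and is orthogonal to all its generators, $u = 0$.

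Finally, with $\alpha \succ 0$ in hand, set $D_t := \operatorname{diag}(t^{-(e_j + 1/2)})$. The above expansion rewrites as $D_t \tilde G[t] D_t = \alpha + O(t)$, so for all sufficiently small $t$ one has $\tfrac{1}{2} \lambda_{\min}(\alpha)\, I \preceq D_t \tilde G[t] D_t \preceq 2 \lambda_{\max}(\alpha)\, I$. Congruence by the positive diagonal $D_t^{-1}$ preserves the Loewner order, yielding
\[
\tfrac{1}{2} \lambda_{\min}(\alpha)\, D_t^{-2} \preceq \tilde G[t] \preceq 2 \lambda_{\max}(\alpha)\, D_t^{-2}.
\]
Since $e_j$ is nondecreasing and $t \le 1$, the $i$th largest eigenvalue of the diagonal matrix $D_t^{-2}$ is exactly $t^{2 e_i + 1}$, so Weyl's monotonicity of eigenvalues delivers $\lambda_i(G[t]) = \lambda_i(\tilde G[t]) = \Theta(t^{2 e_i + 1})$ as claimed.
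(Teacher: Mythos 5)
Your proof is correct, and it takes a genuinely different route from the paper. The paper applies the Courant--Fischer min--max characterization directly to $G[t]$: it computes the Rayleigh quotient $R_{G[t]}[\hv_i]=\Theta[t^{2e_i+1}]$ using the same structural fact you isolate ($\hv_i^T A^p B=0$ for $p<e_i$), takes $\mathcal{V}=\Span\{\hv_i,\dots,\hv_n\}$ for the upper bound, and disposes of the lower bound with a one-line dimension-counting remark (any $(n-i+1)$-dimensional subspace must meet $\Span\{\hv_1,\dots,\hv_i\}$). You instead conjugate by $\hat V$, expand $\tilde G_{jk}[t]$ entrywise to leading order, identify the leading coefficient matrix $\alpha$ as a Gram matrix of the vector-valued monomials $\phi_j(s)=s^{e_j}R^{-1/2}r_j^T$, prove $\alpha\succ 0$ via the left-to-right selection structure (your within-level independence argument is sound, though it can be shortened: $u^T\v_j=\sum_{i\in I_\eta,\,i\le j}c_iX_{ij}=0$ is a triangular system with nonzero diagonal $X_{jj}$, forcing $c=0$), and finish with the diagonal congruence $D_t\tilde G[t]D_t\to\alpha$ plus Weyl monotonicity. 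What your route buys: it makes the $\Omega$ direction fully explicit --- the paper's lower-bound sentence still implicitly needs a uniform lower bound on the Rayleigh quotient over $\Span\{\hv_1,\dots,\hv_i\}$, and positive definiteness of $\alpha$ is exactly the missing uniformity --- and it yields the sharper statement $\tilde G[t]=D_t^{-1}(\alpha+O(t))D_t^{-1}$, from which, e.g., the determinant asymptotics of Lemma~\ref{lem:detG} follow with an explicit constant $\det[\alpha]$ (note $\sum_i(2e_i+1)=\sum_k\nu_k^2=D$). What the paper's route buys is brevity: it never needs the leading-order matrix or its invertibility, at the cost of leaving both bounds somewhat terse.
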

\begin{proof}
We apply the Courant-Fischer Theorem:
\small\begin{equation}\label{eqn:CF}
\lambda_i = \min_{\mathcal{V}}\{\max\{R_{G[t]}[\z] \mid \z \in \mathcal{V}^*\} \mid \dim[\mathcal{V}] = n - i + 1\}
\end{equation}\normalsize
where $R_{G[t]}[\z] = \z^T G[t] \z / \|\z\|$, $\mathcal{V}$ denotes a linear subspace of $\R^n$, and $\mathcal{V}^* = \mathcal{V} \setminus \{\0\}$. Note that
\small\begin{align*}
\hv_i^T \exp[A t]B &= \sum_{j=0}^{e_i-1} \frac{t^j}{j!} \hv_i^T A^j B + \sum_{j=e_i}^\infty \frac{t^j}{j!} \hv_i^T A^j B\\
                   &= \sum_{j=e_i}^\infty \frac{t^j}{j!} \hv_i^T A^j B = \Theta[t^{e_i}] \quad \atoz,
\end{align*}\normalsize
because $\hv_i^T A^j b_k = 0$ for all $j < e_i$, $k = 1,\dots,m$ by construction. Then
\small\begin{align*}
R_{G[t]}[\hv_i] &= \int_0^t \hv_i^T\exp[A s]BR^{-1}B^T\exp[A^T s]\hv_i\dif s\\
                &= \int_0^t \Theta[s^{2e_i}] \dif s = \Theta[t^{2e_i+1}] \quad \atoz.
\end{align*}\normalsize
Making the identification $\mathcal{V} = \Span\{\hv_i,\dots,\hv_n\}$ in Equation~\eqref{eqn:CF} implies that $\lambda_i = O[t^{2e_i+1}]$; to see that $\lambda_i = \Omega[t^{2e_i+1}]$ we note that any subspace $\mathcal{V}$ of dimension $n - i + 1$ cannot satisfy $\hv_1, \dots, \hv_i \in \mathcal{V}^\perp$, as $\dim[\mathcal{V}^\perp] = i-1$.
\end{proof}

Lemma~\ref{lem:STGA} has three immediate corollaries. The first upper bounds $\|G[t]^{-1}\|$ which bounds the local cost of motion in any direction. The second relates $\|\cdot\|_{G[t]^{-1}}$ to the Euclidean norm $\|\cdot\|$ through a norm-equivalence inequality. The third is a lower bound for the determinant of $G[t]$, a result that will prove useful for estimating the volumes of reachable sets.
\begin{lemma}[Small-time minimum eigenvalue of controllability Gramian]\label{lem:normG}
Suppose that the pair $(A,B)$ has controllability index $\nu$, then $\lambda_n[G[t]] = \Theta[t^{2\nu-1}]$ as $t\to0$, or, equivalently, $\|G[t]^{-1}\| = \Theta[t^{-2\nu+1}]$.
\end{lemma}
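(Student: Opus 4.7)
The plan is to observe that this lemma is essentially a direct specialization of Lemma~\ref{lem:STGA} together with a standard fact about symmetric positive definite matrices. Since $G[t]$ is symmetric positive definite for all $t > 0$ under assumptions $A_\Sigma$, its smallest eigenvalue $\lambda_n[G[t]]$ is precisely the reciprocal of the operator-norm of $G[t]^{-1}$, so the two claims are equivalent and it suffices to establish the asymptotic behavior of $\lambda_n[G[t]]$.

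First I would apply Lemma~\ref{lem:STGA} at the index $i = n$, which yields $\lambda_n[G[t]] = \Theta[t^{2e_n + 1}]$ as $t \to 0$. The key identification is then $e_n = \nu - 1$, which was essentially spelled out in the construction of the ordered orthogonalization $\hv_1, \dots, \hv_n$ preceding Lemma~\ref{lem:STGA}: the vectors $\v_i = A^{e_i}\b_{k_i}$ are enumerated left-to-right across the blocks $B, AB, \dots, A^{n-1}B$ of $\C[A,B]$, so the last independent vector picked up arises from the highest power $A^j$ required to complete the basis, and that highest power is $\nu - 1$ by definition of the controllability index $\nu = \max_k \nu_k$. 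Substituting gives $\lambda_n[G[t]] = \Theta[t^{2\nu - 1}]$.

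Finally, I would conclude the equivalent norm bound. Since $G[t]$ is symmetric positive definite, $G[t]^{-1}$ is likewise symmetric positive definite with eigenvalues $1/\lambda_i[G[t]]$, so its induced 2-norm is $\|G[t]^{-1}\| = 1/\lambda_n[G[t]] = \Theta[t^{-(2\nu-1)}] = \Theta[t^{-2\nu + 1}]$, yielding the claim.

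The only real subtlety, and hence the "main obstacle," is the identification $e_n = \nu - 1$. This is not a calculation but a careful bookkeeping point tied to the construction of the controllability indices, and it should be explicitly justified rather than left implicit, since the asymptotic exponent is entirely determined by it. Everything else is a one-line invocation.
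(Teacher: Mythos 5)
Your proposal is correct and follows exactly the route the paper intends: the lemma is stated as an immediate corollary of Lemma~\ref{lem:STGA} with $i=n$, using the identification $e_n = \nu - 1$ (which the paper already records parenthetically when defining the ordering of the $\v_i$), and the norm statement follows since $\|G[t]^{-1}\| = 1/\lambda_n[G[t]]$ for symmetric positive definite $G[t]$. Your explicit justification of $e_n = \nu-1$ is a nice touch but is the same bookkeeping the paper relies on.
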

\begin{lemma}[Norm Equivalence]\label{lem:normEQ}
Suppose that the pair $(A,B)$ has controllability index $\nu$, and consider the Cholesky factorization $G[t] = L[t]L[t]^T$. Then for $\x \in \R^n$, $\|\x\|_{G[t]^{-1}} = \|L[t]^{-1}\x\|$ and
\[
\frac{\|\x\|}{\|L[t]\|} \leq \|\x\|_{G[t]^{-1}} \leq \|L[t]^{-1}\|\|\x\|
\]
where $\|L[t]\| = \Theta[t^{-1/2}]$ and $\|L[t]^{-1}\| = \Theta[t^{-\nu+1/2}]$.
\end{lemma}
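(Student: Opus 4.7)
The plan is to reduce everything to the identity $\|\x\|_{G[t]^{-1}} = \|L[t]^{-1}\x\|$ and then extract the stated two-sided bound from the submultiplicativity of the operator norm. The asymptotic statements for $\|L[t]\|$ and $\|L[t]^{-1}\|$ will follow by translating the eigenvalue estimates of Lemma~\ref{lem:STGA} and Lemma~\ref{lem:normG} through the Cholesky factorization.

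First, I would unfold the definition of the weighted norm: since $G[t]^{-1} = L[t]^{-T} L[t]^{-1}$, we have
\[
\|\x\|_{G[t]^{-1}}^2 = \x^T G[t]^{-1} \x = (L[t]^{-1}\x)^T (L[t]^{-1}\x) = \|L[t]^{-1}\x\|^2,
\]
which yields the opening equality. For the upper bound, submultiplicativity gives $\|L[t]^{-1}\x\| \leq \|L[t]^{-1}\|\|\x\|$. For the lower bound, I would use the trivial rewrite $\x = L[t] L[t]^{-1}\x$ and apply submultiplicativity the other way:
\[
\|\x\| = \|L[t]\,L[t]^{-1}\x\| \leq \|L[t]\|\,\|L[t]^{-1}\x\|,
\]
which rearranges to $\|\x\|/\|L[t]\| \leq \|L[t]^{-1}\x\| = \|\x\|_{G[t]^{-1}}$.

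For the asymptotics I would invoke the standard 2-norm identity $\|M\|^2 = \|MM^T\|$ applied to $M = L[t]$ and $M = L[t]^{-1}$, giving
\[
\|L[t]\|^2 = \|L[t]L[t]^T\| = \|G[t]\| = \lambda_1[t], \qquad \|L[t]^{-1}\|^2 = \|G[t]^{-1}\| = 1/\lambda_n[t].
\]
Since the first controllability index exponent is $e_1 = 0$ (the column $\b_1 = A^0\b_1$ is the first vector in $\mathcal{S}$), Lemma~\ref{lem:STGA} gives $\lambda_1[t] = \Theta[t]$, hence $\|L[t]\| = \Theta[t^{1/2}]$ (correcting the apparent typographical sign in the statement). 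Similarly, Lemma~\ref{lem:normG} yields $\|L[t]^{-1}\| = \Theta[t^{-\nu+1/2}]$.

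There is no serious obstacle here; the lemma is essentially bookkeeping that packages the eigenvalue asymptotics of $G[t]$ into norm-equivalence bounds on the weighted norm $\|\cdot\|_{G[t]^{-1}}$. The only subtlety is remembering that the extreme singular values of the Cholesky factor $L[t]$ are the square roots of the extreme eigenvalues of $G[t]$, which is exactly where the prior two lemmas plug in.
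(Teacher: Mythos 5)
Your proof is correct and follows exactly the route the paper intends: the lemma is stated there as an immediate corollary of Lemma~\ref{lem:STGA}, and your Cholesky/submultiplicativity bookkeeping with $\|L[t]\|^2=\lambda_1[t]$ and $\|L[t]^{-1}\|^2=1/\lambda_n[t]$ (using $e_1=0$ and Lemma~\ref{lem:normG}) is precisely that argument. You are also right that the stated $\|L[t]\|=\Theta[t^{-1/2}]$ is a sign typo for $\Theta[t^{1/2}]$, which is consistent with $G[t]\to 0$ as $t\to 0$ and with how the bound is invoked in the proof of Lemma~\ref{lem:ball2ball}.
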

\begin{lemma}[Small-time determinant of controllability Gramian]\label{lem:detG}
Suppose that the pair $(A,B)$ has controllability indices $\left\{\nu_1,\dots,\nu_m\right\}$, then $\det[G[t]] = \Theta[t^D]$ as $t\to0$ where $D = \sum_{k=1}^m \nu_k^2$.
\end{lemma}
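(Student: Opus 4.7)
The plan is to deduce the claim directly from Lemma~\ref{lem:STGA} combined with a simple combinatorial identity on the exponents $e_i$.

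First I would note that since $G[t]$ is symmetric positive definite for all $t>0$ under the assumption $A_\Sigma$, its determinant equals the product of its eigenvalues: $\det[G[t]] = \prod_{i=1}^n \lambda_i[t]$. Applying Lemma~\ref{lem:STGA}, which states $\lambda_i[t] = \Theta[t^{2e_i+1}]$ as $t\to 0$, and using the fact that the product of finitely many $\Theta$ terms is $\Theta$ of the product, I obtain
\begin{equation*}
\det[G[t]] \;=\; \Theta\!\left[t^{\sum_{i=1}^n (2e_i+1)}\right] \quad \atoz.
\end{equation*}

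It then remains to verify the exponent identity $\sum_{i=1}^n (2e_i+1) = \sum_{k=1}^m \nu_k^2$. To see this I would unpack the definition of the $e_i$'s: the set $\mathcal{S}$ consists, for each input index $k\in\{1,\dots,m\}$, of the vectors $A^{0}\b_k, A^1\b_k, \dots, A^{\nu_k - 1}\b_k$. Hence as $i$ ranges over those indices with $k_i = k$, the exponent $e_i$ takes exactly the values $0, 1, \dots, \nu_k - 1$. Grouping the sum by $k$ and using the elementary identity $\sum_{j=0}^{N-1}(2j+1) = N^2$, I get
\begin{equation*}
\sum_{i=1}^n (2e_i+1) \;=\; \sum_{k=1}^m \sum_{j=0}^{\nu_k - 1} (2j+1) \;=\; \sum_{k=1}^m \nu_k^2 \;=\; D,
\end{equation*}
which yields $\det[G[t]] = \Theta[t^D]$ as desired.

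There is no genuine obstacle here; the proof is essentially bookkeeping once Lemma~\ref{lem:STGA} is in hand. The only point requiring minor care is justifying that the $\Theta$ estimates on individual eigenvalues multiply to give a $\Theta$ estimate on the product — this is immediate because each factor is bounded between two positive constant multiples of $t^{2e_i+1}$ for $t$ in a one-sided neighborhood of $0$, and the product of these constant bounds gives the matching constants for the product estimate.
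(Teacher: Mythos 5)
Your proof is correct and follows exactly the route the paper intends: the paper states Lemma~\ref{lem:detG} as an immediate corollary of Lemma~\ref{lem:STGA}, and your argument (determinant as product of eigenvalues, each $\Theta[t^{2e_i+1}]$, plus the exponent bookkeeping $\sum_i(2e_i+1)=\sum_k \nu_k^2$ via $\sum_{j=0}^{\nu_k-1}(2j+1)=\nu_k^2$) is precisely the omitted verification. Nothing further is needed.
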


\section{Probabilistic Exhaustivity}\label{sec:exhaust}

In this section we prove a key result characterizing random sampling schemes for the LQDMP problem: \emph{any} feasible trajectory through the configuration space $\M$ is ``traced" arbitrarily well by connecting randomly distributed points from a sufficiently large sample set covering the configuration space. We will refer to this property as probabilistic exhaustivity. The same notion of probabilistic exhaustivity (clearly much stronger than the usual notion of probabilistic completeness) was introduced in the related paper \cite{ES-LJ-MP:15a} in the context of DMP for \emph{driftless} systems. The result proven in that work does not carry over to the drift case as it relies on the metric inequality to bound the cost of approximate paths; the drift case lacks the control symmetry to make such estimates. Thus in order to prove probabilistic exhaustivity in the case of linear affine systems, we first provide a result analogous to the metric inequality characterizing the effect that perturbations of the endpoints of a path have on its cost and state trajectory. The idea, then, is that tracing waypoints may be selected as small perturbations of points along the trajectory to be approximated, provided the sample density is high enough.

\begin{lemma}[Fixed-Time Local Trajectory Approximation]\label{lem:ball2ball}
Let $\x_0, \x_1 \in \M$, $\x_0 \neq \x_1$, $\pi = \pi^*[\x_0,\x_1]=(\x, \u, \T^*)$, and denote $c = c[\pi]$. Consider bounded start and end state perturbations $\dx_0, \dx_1 \in \R^n$ such that $\max\{\Gnorm{\dx_0}{\T^*}, \Gnorm{\dx_1}{\T^*}\} \leq \eta \sqrt{c}$. Let $\sigma = \pi^*[\x_0 + \dx_0,\x_1 + \dx_1] =  (\y, \v, \tilde\T^*)$ be the optimal trajectory between the perturbed endpoints. Then for $\x_0, \x_1$ such that $c$ is sufficiently small, we have the cost bound
\[
c[\sigma] \leq c[\pi]\left(1 + 4\eta + O[\eta^2 + \eta c]\right).
\]
Additionally we may bound the geometric extent of $\sigma$:
\[
\|\y[t]-\x_0\| = O\left[c(\|\x_0\| + \eta + 1)\right]
\]
for $t \in [0, \tilde\T^*]$.
\end{lemma}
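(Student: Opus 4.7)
\emph{Cost bound.} The strategy is to upper bound the minimum cost $c[\sigma]$ by the cost of the \emph{suboptimal} fixed-time trajectory connecting $\x_0+\dx_0$ to $\x_1+\dx_1$ in the original time $\T^*$. Since the zero-input response from the perturbed initial state is $\bar\x[t]+\exp[At]\dx_0$, formula~\eqref{eqn:optc} gives this suboptimal cost as $\T^*+\Gnorm{(\x_1+\dx_1)-\bar\x[\T^*]-\exp[A\T^*]\dx_0}{\T^*}^2$. Setting $\xi:=\x_1-\bar\x[\T^*]$, so that $\Gnorm{\xi}{\T^*}^2=c-\T^*$ and in particular $\T^*\leq c$, the triangle inequality reduces the task to bounding
\[
\Gnorm{\xi+\dx_1-\exp[A\T^*]\dx_0}{\T^*}\leq\Gnorm{\xi}{\T^*}+\Gnorm{\dx_1}{\T^*}+\Gnorm{\exp[A\T^*]\dx_0}{\T^*}.
\]

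\emph{Main obstacle.} The heart of the proof is the Gramian-norm near-isometry
\[
\Gnorm{\exp[A\T^*]\dx_0}{\T^*}\leq(1+O[\T^*])\Gnorm{\dx_0}{\T^*},
\]
which is equivalent to the matrix inequality $\exp[A^T\T^*]G[\T^*]^{-1}\exp[A\T^*]\preceq(1+O[\T^*])^2\,G[\T^*]^{-1}$. A Lyapunov-type derivation analogous to~\eqref{eqn:G} identifies the inverse of the left-hand side with the \emph{reverse-time} controllability Gramian $H[t]:=\int_0^t\exp[-As]BR^{-1}B^T\exp[-A^Ts]\dif s$. Since flipping the sign of $A$ affects only subleading terms of the controllability-index asymptotics, Lemma~\ref{lem:STGA} applies equally to $H[t]$, so $G[\T^*]$ and $H[\T^*]$ agree at leading order spectrally, and a Taylor expansion in the Cholesky basis of $G[\T^*]$ produces the multiplicative $O[\T^*]$ correction. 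Combining with the hypothesis $\Gnorm{\dx_0}{\T^*},\Gnorm{\dx_1}{\T^*}\leq\eta\sqrt c$, squaring, and using $\T^*\leq c$ then yields
\[
c[\sigma]\leq\T^*+\left(\sqrt{c-\T^*}+(2+O[\T^*])\eta\sqrt c\right)^2\leq c\bigl(1+4\eta+O[\eta^2+\eta c]\bigr),
\]
the first claim.

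\emph{Geometric extent.} For the second claim, rather than unrolling~\eqref{eqn:exptraj}, I plan to integrate the dynamics of $\sigma$ directly: $\y[t]-\x_0=\dx_0+\int_0^t\bigl(A(\y[s]-\x_0)+A\x_0+B\v[s]+\c\bigr)\dif s$. Bounding $\int_0^t\|B\v\|\dif s=O[\sqrt{tc}]$ via Cauchy--Schwarz applied to $c_u[\sigma]=O[c]$ (inherited from the cost bound), and $\|\dx_0\|^2\leq\lambda_1[G[\T^*]]\Gnorm{\dx_0}{\T^*}^2=\Theta[\T^*]\cdot\eta^2 c=O[\eta^2 c^2]$ by Lemma~\ref{lem:STGA}, together with $\tilde\T^*=O[c]$ from the cost bound, Gronwall's inequality gives $\|\y[t]-\x_0\|=O[\|\dx_0\|+c(\|\x_0\|+1)]=O[c(\|\x_0\|+\eta+1)]$, completing the proof.
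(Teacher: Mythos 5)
Your overall route is the same as the paper's: you bound $c[\sigma]$ by the cost of the suboptimal connection that reuses the unperturbed arrival time $\T^*$, so that by \eqref{eqn:optc} $c[\sigma]\le\T^*+\Gnorm{\x_1-\bar\x[\T^*]+\dx_1-\exp[A\T^*]\dx_0}{\T^*}^2$, then absorb the perturbations (the paper expands the square and applies weighted Cauchy--Schwarz where you use the triangle inequality and square; these are equivalent) and use $\T^*\le c$; for the geometric bound both you and the paper control $\int_0^{\tilde\T^*}\|\v[t]\|\dif t$ by Cauchy--Schwarz against the control-effort part of $c[\sigma]$ and bound $\|\dx_0\|=O(\sqrt{\T^*})\Gnorm{\dx_0}{\T^*}=O(\eta c)$ via the Gramian spectrum, your Gronwall step in place of the explicit variation-of-constants formula being immaterial. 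You have also correctly isolated the crux: the term $\Gnorm{\exp[A\T^*]\dx_0}{\T^*}$, which the paper bounds by $\eta\sqrt c(1+O[\T^*])$ without further comment.

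However, your justification of that step fails, and the near-isometry you state is false as a uniform matrix inequality. The identification $\exp[A^T\T]G[\T]^{-1}\exp[A\T]=H[\T]^{-1}$ with $H[\T]=\int_0^\T\exp[-As]BR^{-1}B^T\exp[-A^Ts]\dif s$ is correct, but ``$G$ and $H$ agree at leading order spectrally'' does not imply $H[\T]^{-1}\preceq(1+O[\T])^2G[\T]^{-1}$: matching eigenvalue asymptotics say nothing about the quadratic forms along directions that mix the eigenspaces at the scale at which they separate, and the sign-flipped cross terms of $H$ live exactly there. Concretely, for the double integrator ($n=2$, $A=\bigl(\begin{smallmatrix}0&1\\0&0\end{smallmatrix}\bigr)$, $B=\bigl(\begin{smallmatrix}0\\1\end{smallmatrix}\bigr)$, $R=1$) one has $G[\T]^{-1}=\bigl(\begin{smallmatrix}12/\T^3&-6/\T^2\\-6/\T^2&4/\T\end{smallmatrix}\bigr)$, and choosing $\dx_0\propto(\T/\sqrt3,\,1)$ gives $\Gnorm{\exp[A\T]\dx_0}{\T}^2/\Gnorm{\dx_0}{\T}^2=7+4\sqrt3\approx 13.9$ for every $\T>0$, so no Taylor expansion in the Cholesky basis can produce a $1+O[\T]$ factor. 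What does hold is a system-dependent constant-factor bound $\Gnorm{\exp[A\T]\dx_0}{\T}\le C_\Sigma\Gnorm{\dx_0}{\T}$ for small $\T$ (for instance from $\exp[A\T]G[\T]\exp[A^T\T]=G[2\T]-G[\T]\preceq G[2\T]$, once one proves a doubling estimate $G[2\T]\preceq C\,G[\T]$); feeding this into your computation yields $c[\sigma]\le c\bigl(1+2(1+C_\Sigma)\eta+O[\eta^2+\eta c]\bigr)$, i.e.\ the lemma with the constant $4$ replaced by a system constant, which is all the downstream tracing argument needs after rescaling the perturbation radii. To be fair, the paper's own proof asserts exactly this $1+O[\T^*]$ factor at the same point without proof, so you have reproduced its gap rather than introduced a new one; but as written your reverse-time-Gramian spectral argument does not close it, and the uniform $1+O[\T^*]$ version cannot be closed at all.
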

\begin{proof}
Since $\sigma$ is the optimum, regardless of the value of $\tilde\T^*$ we have the upper bound
\small\begin{align*}
c[\sigma] \leq \T^* + \Gnorm{\x_1 - \xb[\T^*] + \dx_1 - \exp[A\T^*]\dx_0}{\T^*}^2
\end{align*}\normalsize
which expands as
\small\begin{align*}
c[\sigma] - c &\leq 2(\x_1 - \xb[\T^*])^T G[\T^*]^{-1} (\dx_1 - \exp[A\T^*]\dx_0)\\
              &\qquad + \Gnorm{\dx_1 - \exp[A\T^*]\dx_0}{\T^*}^2\\
          &\!\!\!\!\!\leq 2\Gnorm{\x_1 - \xb[\T^*]}{\T^*} \Gnorm{\dx_1 - \exp[A\T^*]\dx_0}{\T^*}\\
          &\qquad + O[\eta^2 c]\\
          &\!\!\!\!\!\leq 2\sqrt{c-\T^*} \left(\eta\sqrt{c} + \eta\sqrt{c}(1 + O[\T^*])\right) + O[\eta^2 c]\\
          &\!\!\!\!\!\leq 4\eta c (1 + O[\eta + c]),
\end{align*}\normalsize
where in the second line we have applied the (weighted) Cauchy-Schwarz inequality, and, in the last line, the fact that $\T^* \leq c$.

To bound $\|\y[t] - \x_0\|$, we first apply Cauchy-Schwarz:
\small\begin{align*}
\int_{0}^{\tilde\T^*} \|\v[t]\|\dif t & \leq \left(\tilde\T^* \int_{0}^{\tilde\T^*} \|\v[t]\|^2\dif t \right)^{1/2} \\
                                      & \leq \left(\frac{\tilde\T^*}{\lambda_{\min}(R)} \int_{0}^{\tilde\T^*} \v[t]^T R \v[t]\dif t \right)^{1/2} \\
                                      & = \left(\frac{\tilde\T^*}{\lambda_{\min}(R)} (c[\sigma] - \tilde\T^*) \right)^{1/2} \leq \frac{c[\sigma]}{2\sqrt{\lambda_{\min}(R)}}.
\end{align*}\normalsize
Then integrating the system dynamics \eqref{eqn:linsys} yields
\small\begin{align*}
&\|\y[t] - \x_0\| = \left\|(e^{At} - I)\x_0 + e^{At}\dx_0 + \int_0^t e^{As}\c\dif s\right. \\
                 &\quad \qquad \qquad \qquad \qquad \qquad \qquad \left.+ \int_0^t e^{A(t-s)}B\v(s)\dif s \right\|\\
                 &= O\left[\tilde\T^* \|\x_0\| + \T^{*^{1/2}}\Gnorm{\dx_0}{\T^*} + \tilde\T^* + \int_{0}^{\tilde\T^*} \|\v[t]\|\dif t\right]\\
                 &= O[c(\|\x_0\| + \eta + 1)],
\end{align*}\normalsize
making use of a norm equivalence bound (Lemma~\ref{lem:normEQ}) in line two. We note that the asymptotic constants depend only on the fixed system dynamics $\Sigma = (A, B, \c, R)$.
\end{proof}

Motivated by Lemma~\ref{lem:ball2ball}, we define the perturbation ball
\[
\Delta[\x, \T, r] = \left\{\z \mid \Gnorm{\x-\z}{\T} \leq r\right\}.
\]
This set represents perturbations of $\x$ with limited effects on both incoming and outgoing trajectories (depending on whether a point is viewed as an end state or start state perturbation respectively). We note that since $\|\cdot\|_{G[t]^{-1}}$ is decreasing as $t$ increases, we have
\begin{equation}\label{eqn:ballcontain}
\Delta[\x, \T_1, r] \subset \Delta[\x, \T_2, r] \ \text{ for } \T_1 \leq \T_2.
\end{equation}
To understand how often sample points of a planning algorithm will lie within $\Delta[\x, \T, r]$, we lower bound its volume.
\begin{remark}[Bounding Perturbation Ball Volume]\label{rem:ballvol}
The inequality $\Gnorm{\x-\z}{\T} \leq r$ defines an ellipse with volume
\[
\mu[\Delta[\x, \T, r]] = r^n \zeta_n \sqrt{\det[G[\T]]}
\]
where $\zeta_n$ denotes the volume of the unit ball in $\R^n$. Given our asymptotic characterization of $\det[G[\T]]$ in Lemma~\ref{lem:detG}, there is a threshold $\T_\mu > 0$ and constant $C_\mu > 0$ such that
\[
\mu[\Delta[\x, \T, r]] \geq C_\mu r^n \T^{D/2}
\]
for all $\T < \T_\mu$.
\end{remark}

To ensure that the $\T^{D/2}$ term above does not become vanishingly small in application with Lemma~\ref{lem:ball2ball}, we also lower bound connection time in terms of connection cost. 

\begin{lemma}[Optimal Cost/Time Breakdown]\label{lem:costtime}
Let $\x_0, \x_1 \in \M$, $\x_0 \neq \x_1$, $\pi = \pi^*[\x_0,\x_1]=(\x, \u, \T^*)$, and denote $c = c[\pi]$. For $\x_0, \x_1$ such that $c$ is sufficiently small,
$
\tau^* \geq \frac{c}{3}.
$
\end{lemma}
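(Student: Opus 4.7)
The plan is to exploit the first-order optimality condition for the free-terminal-time cost $c[\T] = \T + f[\T]$, where $f[\T] = \Gnorm{\x_1 - \bar\x[\T]}{\T}^2$ is the minimum control energy for the fixed-time 2BVP. At $\T = \tau^*$ we have $c'[\tau^*] = 0$, i.e., $f'[\tau^*] = -1$. My goal is to show that as $c \to 0$, $f[\tau^*]/\tau^*$ approaches a limit of at most $1$, from which $f[\tau^*] \leq 2\tau^*$ follows for $c$ sufficiently small, giving $c = \tau^* + f[\tau^*] \leq 3\tau^*$.

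First I compute $f'[\T]$ directly from $f[\T] = d[\T]^T G[\T]^{-1} d[\T]$, where $d[\T] = \x_1 - \bar\x[\T]$. Using $\dot d[\T] = -(A\bar\x[\T] + \c) = -(A\x_1 + \c) + A d[\T]$ together with $\frac{d}{d\T} G[\T]^{-1} = -G[\T]^{-1}(AG[\T] + G[\T]A^T + BR^{-1}B^T)G[\T]^{-1}$, the terms $d^T G[\T]^{-1} A d$ cancel with the symmetric contribution from $\dot G$, leaving
\[
-f'[\T] = 2 d^T G[\T]^{-1}(A\x_1 + \c) + d^T G[\T]^{-1} BR^{-1}B^T G[\T]^{-1} d.
\]
Inspection of \eqref{eqn:optu} identifies the second term as $\u[\tau^*]^T R \u[\tau^*]$, the instantaneous control cost at the terminal time.

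The key estimate is that for small $\tau^*$ the total energy is well-approximated by the terminal instantaneous cost times $\tau^*$. From $\u[t] = R^{-1} B^T \exp[A^T(\tau^*-t)] G[\tau^*]^{-1} d$ and the expansion $\exp[A^T(\tau^*-t)] = I + O[\tau^*]$ uniformly on $[0, \tau^*]$, the integrand $\u[t]^T R \u[t]$ equals $\u[\tau^*]^T R \u[\tau^*] (1 + O[\tau^*])$, so
\[
f[\tau^*] = \tau^* \, \u[\tau^*]^T R \u[\tau^*] \, (1 + O[\tau^*]).
\]
The cross term is bounded via (weighted) Cauchy-Schwarz: $|2 d^T G[\tau^*]^{-1}(A\x_1 + \c)| \leq 2\sqrt{f[\tau^*]}\,\|A\x_1 + \c\|_{G[\tau^*]^{-1}}$. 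Combining this with the norm equivalence in Lemma~\ref{lem:normEQ}, the boundedness of $\x_1$ (and its closeness to $\x_0$ as $c \to 0$ by the geometric-extent portion of Lemma~\ref{lem:ball2ball}), and the asymptotic rate $f[\tau^*]/\tau^* \gtrsim 1$ established from the other terms, the cross term is shown to be $o[f[\tau^*]/\tau^*]$ as $c \to 0$. Substituting into $-f'[\tau^*] = 1$ yields $1 = f[\tau^*]/\tau^* \cdot (1 + o[1])$, so $f[\tau^*] \leq 2\tau^*$ for $c$ small enough, completing the proof.

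The main obstacle is the approximation $f[\tau^*] \approx \tau^* \u[\tau^*]^T R \u[\tau^*]$: controlling the deviation requires care because $G[\tau^*]^{-1} d$ can have components blowing up like $\tau^{*-(2\nu-1)/2}$ in the hardest-to-control directions (Lemma~\ref{lem:normG}), but the matrix exponential correction $\exp[A^T(\tau^*-t)] - I$ is nevertheless of order $\tau^*$ uniformly in $t$, so the relative deviation of $\u[t]^T R \u[t]$ from $\u[\tau^*]^T R \u[\tau^*]$ remains small. A subtlety is that one must bound deviations multiplicatively, not additively, to avoid picking up the Gramian blowup; this is enabled by the product structure of $\u[t]$.
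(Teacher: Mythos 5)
Your stationarity computation $-f'[\T] = 2\,d^T G[\T]^{-1}(A\x_1+\c) + \u[\T]^T R\,\u[\T]$ is correct, but the proof breaks at the central estimate $f[\T^*] = \T^*\,\u[\T^*]^T R\,\u[\T^*]\,(1+O[\T^*])$: that approximation is false, and it fails by exactly the Gramian-blowup mechanism you flag as a ``subtlety.'' Writing $\w = G[\T^*]^{-1}d$, the control is $\u[t] = R^{-1}B^T\exp[A^T(\T^*-t)]\,\w$; the matrix-exponential correction is $O[\T^*]$ additively, but it feeds the components of $\w$ along the $A^kB$-type directions --- which are larger than the components seen directly by $B^T$ by factors of order $\T^{*-k}$ --- into $\u[t]$, so the ``correction'' is of the same order as $\u[\T^*]$ itself; the product structure does not rescue multiplicative closeness because $B^T$ need not see the directions in which $\w$ is largest. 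Concretely, for the double integrator ($n=2$, $m=1$, $R=1$) steering $(0,0)$ to $(\delta,0)$ in time $\T$, one computes $\u[t] = 12\delta(\T-t)/\T^3 - 6\delta/\T^2$, which varies at leading order and even changes sign on $[0,\T]$; here $f[\T] = 12\delta^2/\T^3$ while $\T\,\u[\T]^2 = 36\delta^2/\T^3 = 3\,f[\T]$, so your claimed $(1+O[\T^*])$ factor is actually a factor of $3$. In general (up to the cross term) $\T^*\u[\T^*]^T R\,\u[\T^*] \approx e\, f[\T^*]$, where $e\in[1,2\nu-1]$ is the leading exponent in the expansion $f[\T] = a\T^{-e} + O[\T^{-e+1}]$, so the identity $1 = (f[\T^*]/\T^*)(1+o(1))$ does not follow from stationarity; closing the argument along your lines requires knowing this power-law structure --- which is essentially the paper's proof. (Your final inequality $f[\T^*]\le 2\T^*$ happens to be true, since in truth $f[\T^*]/\T^*\to 1/e\le 1$, but your derivation does not establish it.)

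The paper avoids the pointwise-in-$t$ control analysis altogether: it expands the energy term as $a\T^{-e}+O[\T^{-e+1}]$ with $1\le e\le 2\nu-1$ (the bound $e\ge 1$ coming from Lemma~\ref{lem:STGA}), minimizes $\T + a\T^{-e}$ explicitly to get $\T^* = (ae)^{1/(1+e)}+o(1)$ and $\T^*/c = (1+1/e)^{-1}(1+o(1))\ge 1/2 - o(1)$, which yields the stated $1/3$ bound. If you want to salvage a stationarity-based route, you would need an inequality of the form $f[\T^*]\le -\T^* f'[\T^*]\,(1+o(1))$ rather than an equality with the terminal instantaneous cost, and that again rests on the structure of $f[\T]$ near $\T=0$, not on the uniform expansion $\exp[A^T(\T^*-t)] = I + O[\T^*]$. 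Your cross-term treatment also has a circularity risk (you invoke ``$f[\T^*]/\T^*\gtrsim 1$ established from the other terms'' while that is what is being proved), but this is secondary to the main gap.
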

\begin{proof}
For fixed $\x_0 \neq \x_1$, consider the series expansion of the control effort term $\Gnorm{\x_1-\bar \x[\T]}{\T}^2$ about $\T = 0$. We claim that $\Gnorm{\x_1-\bar \x[\T]}{\T}^2 = a\T^{-e} + O[\T^{-e+1}]$ for some $1 \leq e \leq 2\nu - 1$. The fact that $e \geq 1$ follows from the fact that $\x_1-\bar \x[\T] = (\x_1 - \x_0) + O[\T]$ has a nonzero zeroth order term and $\lambda_{\min}[G[\T]^{-1}] = \Theta[\T^{-1}]$ as a consequence of Lemma~\ref{lem:STGA}. We note that for general $\x_0, \x_1$, however, it is almost certain that $(\x_1 - \x_0)$ or one of the low-order (in $\T$) terms of $\x_1-\bar \x[\T]$ will have a component along the maximal eigenvector of $G[\T]^{-1}$, which may result in a series expansion term with $e$ up to $2\nu - 1$. Then
\[
c[\T] = \T + a\T^{-e} + O[\T^{-e+1}]
\]
is maximized at $\T^* = (ae)^{1/(1+e)} + o(1)$ and we compute the ratio
\[
\T^* / c = (1+1/e)^{-1}(1+o(1))
\]
as $c \to 0$. The dominant term in the asymptotics is smallest when $e=1$ (corresponding to $\T^* / c \approx 1/2$); in particular we have $\T^* / c \geq 1/3$ for $\x_0, \x_1$ with $c$ sufficiently small.\footnote{We note that this bound does not depend on the actual values of $\x_0$ and $\x_1$ (in particular the constant $a$), but only their optimal connection cost.}
\end{proof}

Lemma~\ref{lem:ball2ball} is a statement about local trajectory approximation. We now define what it means for a series of states to closely approximate a given global trajectory. Let $\pi = (\x, \u, T_\pi)$ be a dynamically feasible trajectory. Given a set of waypoints $\{\y_k\}_{k=0}^K \subset \M$, we associate the trajectory $\sigma = \pi^*[\y_0,\dots,\y_K]=(\y,\v,T_\sigma)$. We consider the $\{\y_k\}$ to \emph{$(\eps, r, p)$-trace} the trajectory $\pi$ if: (a) the cost of $\sigma$ is bounded as $c[\sigma] \leq (1+\eps)c[\pi]$, (b) $c^*[\y_k,\y_{k+1}] \leq r$ for all $k$, and (c) the maximum distance from any point of $\y$ to $\x$ is no more than $p$, i.e.
$
\max_{t\in[0,T_\sigma]}\left(\min_{s\in[0,T_\pi]} \|\y[t] - \x[s]\|\right) \leq p.
$
The combination of these three properties is what makes $\sigma$, if approximating a near-globally-optimal trajectory $\pi$, amenable to recovery by the path planning algorithms we propose in the next section. In particular, (b) ensures that $\sigma$ is the concatenation of uniformly local connections.
In Theorem~\ref{thm:pathtracing} we show that suitable waypoints may be found with high probability as a subset of a set of randomly sampled nodes, the proof of which requires the following two technical lemmas lower bounding the probability that a sample set will provide adequate coverage around a trajectory of interest.
Let $\SF[N]$ denote a set of $N$ points sampled independently and identically from the uniform distribution on $\Mfree$.
\begin{lemma}[Lemma IV.3, \cite{ES-LJ-MP:15a}]\label{lem:samplesmall} Fix $N \in \N$, $\alpha \in (0,1)$, and let $S_0, \dots, S_K$ be disjoint subsets of $\Mfree$ with
\[
\mu[S_k] = \mu[S_1] \geq (2+\log(1/\alpha))e^2\left(\frac{1}{N}\right) \mu[\Mfree],
\]
for each $k$. Let $V = \SF[N]$; then the probability that more than an $\alpha$ fraction of the sets $S_k$ contain no point of $V$ is bounded as:
\[
\p{\#\{k \in \{0,\dots,K\} : S_k \cap V = \emptyset\} \geq \alpha K} \leq 2e^{-\alpha  K}.
\]
\end{lemma}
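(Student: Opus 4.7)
The plan is to treat each $X_k := \mathbf{1}\{S_k \cap V = \emptyset\}$ as a Bernoulli indicator and to control the upper tail of $\sum_{k=0}^K X_k$ by a Chernoff-type argument that leverages the near-independence inherited from disjointness of the $S_k$. Since $V$ consists of $N$ iid uniform samples on $\Mfree$ and every $S_k$ has common measure $\mu_\star := \mu[S_1]$, the elementary inequality $1 - x \leq e^{-x}$ together with the hypothesized lower bound on $\mu_\star$ gives
\[
\p{X_k = 1} = \Bigl(1 - \tfrac{\mu_\star}{\mu[\Mfree]}\Bigr)^N \leq \exp\!\Bigl(-\tfrac{N \mu_\star}{\mu[\Mfree]}\Bigr) \leq e^{-(2+\log(1/\alpha))e^2} =: q,
\]
a quantity that is minuscule compared to $\alpha$.

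Next I would establish that $X_0,\ldots,X_K$ are negatively associated: pairwise disjointness of the $S_k$ implies that $(|V \cap S_0|, \ldots, |V \cap S_K|)$ is a marginal of a multinomial vector, and coordinatewise-monotone functions of disjoint multinomial coordinates are negatively associated by the classical result of Joag-Dev and Proschan. As a consequence, the moment generating function factorizes as an upper bound,
\[
\mathbb{E}\!\left[e^{t \sum_k X_k}\right] \leq \prod_{k=0}^K \mathbb{E}[e^{t X_k}] \leq \exp\!\bigl((e^t - 1)(K+1)q\bigr),
\]
so Markov's inequality yields
\[
\p{\textstyle\sum_k X_k \geq \alpha K} \leq \exp\!\bigl(-t\alpha K + (e^t - 1)(K+1)q\bigr)
\]
for every $t > 0$. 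Choosing, say, $t = 2$, the strength of the hypothesized bound $q \leq e^{-(2+\log(1/\alpha))e^2}$ makes the correction term $(e^2 - 1)(K+1)q$ utterly dominated by $\alpha K$ uniformly in $K \geq 1$, and an elementary calculation collapses the bound to $2 e^{-\alpha K}$.

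The main obstacle is justifying the factorization of the moment generating function despite the dependence of the $X_k$; the cleanest route is to invoke negative association of multinomial indicators. If one prefers to avoid that lemma by name, an alternative is Poissonization: replace $N$ by $M \sim \mathrm{Poisson}(N)$, apply the above Chernoff argument using exact independence of the resulting per-set counts, and then de-Poissonize via standard Poisson tail bounds on $|M - N|$. The slack absorbed in de-Poissonization is the natural source of the multiplicative factor $2$ in the stated conclusion.
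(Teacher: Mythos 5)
Your proposal is correct. The exact emptiness probability $(1-\mu_\star/\mu[\Mfree])^N \leq q$ with $q \leq e^{-2e^2}\alpha^{e^2}$, the negative association of the indicators (monotone functions of disjoint coordinates of a multinomial count vector), the resulting factorization $\mathbb{E}[e^{t\sum_k X_k}] \leq \prod_k \mathbb{E}[e^{tX_k}]$, and the Chernoff/Markov step with $t=2$ all hold; indeed $(e^2-1)(K+1)q \leq 2(e^2-1)e^{-2e^2}\alpha^{e^2}K \leq \alpha K$ for $K\geq 1$, giving $e^{-2\alpha K+\alpha K} \leq 2e^{-\alpha K}$, and the case $K=0$ is trivial. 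This is, however, a genuinely different route from the proof in the cited source \cite{ES-LJ-MP:15a} (this paper defers the proof there): that argument is an elementary counting one, bounding the probability that any fixed collection $I$ of $\lceil \alpha K\rceil$ of the $S_k$ is simultaneously empty by $\bigl(1-\sum_{k\in I}\mu[S_k]/\mu[\Mfree]\bigr)^N \leq q^{|I|}$ (using disjointness) and then union-bounding over the $\binom{K+1}{\lceil\alpha K\rceil}$ choices of $I$ via a binomial-coefficient estimate, which is exactly where the large constant $(2+\log(1/\alpha))e^2$ is spent. Your version buys a modular, standard concentration argument at the cost of importing the Joag-Dev--Proschan lemma; note you can drop that import entirely, since disjointness already gives $\mathbb{E}\bigl[\prod_{k\in I}X_k\bigr] = \bigl(1-|I|\mu_\star/\mu[\Mfree]\bigr)^N \leq \prod_{k\in I}\mathbb{E}[X_k]$ for every $I$, and expanding $\prod_k\bigl(1+(e^t-1)X_k\bigr)$ term by term (all coefficients positive for $t>0$) recovers the MGF factorization self-containedly. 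The closing speculation that the factor $2$ arises from de-Poissonization does not reflect the original proof, but it is immaterial to correctness.
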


\begin{lemma}[Lemma IV.4, \cite{ES-LJ-MP:15a}]\label{lem:samplebig} Fix $N \in \N$ and let $T_0, \dots, T_K$ be subsets of $\Mfree$, possibly overlapping, with
\[
\mu[T_k] = \mu[T_1] \geq \kappa\left(\frac{\log N}{N} \right)\mu[\Mfree]
\]
for each $k$ and some constant $\kappa > 0$. Let $V = \SF[N]$; then the probability that there exists a $T_k$ that does not contain a point of $V$ is bounded as:
\[
\p{\bigvee_{k=0}^K \{T_k \cap V = \emptyset\}} \leq KN^{-\kappa}.
\]
\end{lemma}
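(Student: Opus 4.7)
The plan is to prove the statement by an elementary union-bound argument over the $N$ i.i.d.\ uniform samples comprising $V$.

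First, I would fix an individual index $k$ and observe that since the $N$ samples in $V$ are drawn independently from the uniform distribution on $\Mfree$, each sample lies outside $T_k$ with probability exactly $1 - \mu[T_k]/\mu[\Mfree]$, so by independence
\[
\p{T_k \cap V = \emptyset} = \left(1 - \frac{\mu[T_k]}{\mu[\Mfree]}\right)^{\!N}.
\]
Next, I would apply $1 - x \leq e^{-x}$ together with the hypothesis $\mu[T_k] \geq \kappa(\log N / N)\mu[\Mfree]$ to collapse this product into a clean polynomial tail:
\[
\p{T_k \cap V = \emptyset} \leq \exp\!\left(-N\,\frac{\mu[T_k]}{\mu[\Mfree]}\right) \leq \exp(-\kappa \log N) = N^{-\kappa}.
\]
Finally, I would apply the union bound over the family $\{T_0, \dots, T_K\}$ to conclude that the probability that at least one $T_k$ is missed is at most $(K+1)N^{-\kappa}$, which matches the stated $KN^{-\kappa}$ up to a reinterpretation of the index count (e.g., reading the $K+1$ sets as $K$ sets, or absorbing the trivial factor). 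This does not affect the use of the lemma.

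There is no substantive obstacle here; the argument is the standard complement-of-coverage calculation. The entire content of the lemma lies in the choice of threshold: the scaling $\mu[T_k] = \Omega(\log N / N)$ is precisely the level at which the per-set miss probability drops to $N^{-\kappa}$, just small enough to survive a union bound over any polynomially growing collection of sets. This is the natural ``big-set'' companion to Lemma~\ref{lem:samplesmall}, and will be used downstream to ensure that \emph{every} perturbation ball $\Delta$ along a candidate trajectory simultaneously contains at least one sample --- a key ingredient for proving probabilistic exhaustivity in Section~\ref{sec:exhaust}.
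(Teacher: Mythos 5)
Your proof is correct and is essentially the same standard argument used for this lemma in the companion work \cite{ES-LJ-MP:15a}, to which the present paper defers: the per-set miss probability $\left(1-\mu[T_k]/\mu[\Mfree]\right)^N \leq e^{-N\mu[T_k]/\mu[\Mfree]} \leq N^{-\kappa}$, followed by a union bound over the collection. The only nit is the $(K+1)$ versus $K$ prefactor, which is an artifact of the indexing in the statement and is immaterial where the lemma is used (in Theorem~\ref{thm:pathtracing} the set $T_0$ is centered at $\x[0]$, which belongs to the vertex set by construction), so it does not affect any downstream bound.
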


The proofs of these two lemmas may be found in our related work \cite{ES-LJ-MP:15a}. As in that work and \cite{LJ-ES-AC-ea:15}, our approach here for proving probabilistic exhaustivity proceeds by tiling the span of a path to be traced with two sequences of concentric perturbation balls – a sequence of ``small'' balls and a sequence of ``large'' balls. With high probability, all but a tiny $\alpha$ fraction of the small balls will contain a point from the sample set (Lemma~\ref{lem:samplesmall}), and for any small balls that do not contain such a point we ensure that the corresponding large ball does (Lemma~\ref{lem:samplebig}). We take these points as a sequence of waypoints which tightly follows the reference path with few exceptions, and never has a gap over any section of the reference path when it does deviate further.

\begin{theorem}[Probabilistic exhaustivity]\label{thm:pathtracing}
Let $\Sigma$ be a system satisfying the assumptions $A_\Sigma$ and suppose $\pi=(\x,\u,T)$ is a dynamically feasible trajectory with strong $\delta$-clearance, $\delta > 0$. Let $N\in\N$, $\eps > 0$, and consider a set of sample nodes $V = \{\x[0]\} \cup \SF[N]$. Define $\tD = (n+D)/2$, $C_{\Sigma, \Mfree} = \left(C_\mu^{-1} \tD^{-1}6^{n+D/2}2^{n/2}\mu[\Mfree]\right)^{1/\tD}$ and consider the event $E_N$ that there exist waypoints $\{\y_k\}_{k=0}^K \subset V$ which $(\eps, r_N, p_N)$-trace $\pi$, where
\[
r_N\!=\!(1+\eta)^{1/\tD} C_{\Sigma, \Mfree} \left(\frac{\log N}{N}\right)^{1/\tD}\!,
\]
for a free parameter $\eta \geq 0$, and $p_N = C_p r_N$ for some constant $C_p$.
Then, as $\Nti$, the probability that no such waypoint set exists is asymptotically bounded as
\[
1 - \p{E_N} = O\left(N^{-\eta/\tD}\log^{-1/\tD} N\right).
\]
\end{theorem}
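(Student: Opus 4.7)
The plan is to transplant the tile-and-cover strategy of \cite{ES-LJ-MP:15a} into the drift setting, replacing its metric-inequality step with the fixed-time perturbation estimate of Lemma~\ref{lem:ball2ball}. First I would partition the time interval $[0,T]$ into $K$ successive subintervals $[t_k, t_{k+1}]$, $k=0,\dots,K-1$, chosen so that the cost of each subarc $\pi_k$ of $\pi$ on $[t_k,t_{k+1}]$ is of order $\Theta(r_N)$; this forces $K = \Theta(1/r_N)$. Around each anchor $\x[t_k]$, and relative to the segment duration $\tau_k = t_{k+1}-t_k$, I would place two concentric weighted perturbation balls: a ``small'' ball $S_k = \Delta[\x[t_k],\tau_k, \eta_s\sqrt{r_N}]$ and a ``large'' ball $T_k = \Delta[\x[t_k],\tau_k, \eta_b\sqrt{r_N}]$, where the dimensionless parameters $\eta_s \ll \eta_b$ will be fixed below. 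Because $\pi$ has strong $\delta$-clearance and Lemma~\ref{lem:normEQ} translates the weighted radius $\eta_b\sqrt{r_N}$ to a Euclidean radius of order $\eta_b$, both families of balls lie inside $\Mfree$ for $N$ large.

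Conditional on the event that each small ball either contains a sample point or its surrounding large ball does, I would pick waypoints $\y_0 = \x[0]$ and $\y_k \in S_k$ whenever possible, defaulting to $T_k$ otherwise, and form $\sigma = \pi^*[\y_0,\dots,\y_K]$. Applying Lemma~\ref{lem:ball2ball} on each leg $\pi_k$ with the perturbations $\dx_0 = \y_k - \x[t_k]$ and $\dx_1 = \y_{k+1} - \x[t_{k+1}]$ yields a per-leg multiplicative cost blow-up $1 + 4\eta_k + O(\eta_k^2 + \eta_k r_N)$, where $\eta_k \in \{\eta_s, \eta_b\}$; summing over all $K$ legs and bounding the number of ``large-ball'' indices by $\alpha K$ gives a total $c[\sigma] \leq (1 + 4\eta_s + 4\alpha(\eta_b - \eta_s) + O(\eta_b^2 + \eta_b r_N)) c[\pi] \leq (1+\eps)c[\pi]$ once $\eta_s$, $\alpha$, and $r_N$ are small enough. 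The geometric bound $p_N = C_p r_N$ follows from the second estimate of Lemma~\ref{lem:ball2ball} applied on each leg, and the per-edge bound $c^*[\y_k,\y_{k+1}] \leq r_N$ follows because each leg inherits the $\Theta(r_N)$ cost of $\pi_k$ up to the same $\eta_b$-factor.

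To make the conditioning event likely, I would convert the weighted radii into volumes using Remark~\ref{rem:ballvol} and Lemma~\ref{lem:costtime}: with $\tau_k \geq r_N/3$ one obtains $\mu[S_k] = \Omega(\eta_s^n r_N^{\tD})$ and $\mu[T_k] = \Omega(\eta_b^n r_N^{\tD})$, where $\tD = (n+D)/2$ combines the ambient dimension $n$ with the Gramian-determinant exponent $D$ of Lemma~\ref{lem:detG}. Choosing $\eta_b$ so that $\eta_b^n r_N^{\tD}$ exactly hits the $(1+\eta)\log N/N$ threshold of Lemma~\ref{lem:samplebig} recovers the stated value of $r_N$ via the definition of $C_{\Sigma,\Mfree}$; selecting $\eta_s$ small (in fact vanishing in $N$) so that $\eta_s^n r_N^{\tD} \sim 1/N$ lets Lemma~\ref{lem:samplesmall} apply to the $\{S_k\}$. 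A union bound then gives
\[
1 - \p{E_N} \;\leq\; 2e^{-\alpha K} \;+\; K N^{-(1+\eta)/\tD},
\]
and substituting $K = \Theta((N/\log N)^{1/\tD})$ renders the first term super-polynomially small and reduces the second to $O(N^{-\eta/\tD}\log^{-1/\tD} N)$, which is the announced rate.

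The main obstacle is the joint calibration of $(\eta_s,\eta_b,\alpha,K,r_N)$: one must simultaneously keep each per-segment inflation in Lemma~\ref{lem:ball2ball} small enough that the product over $K \to \infty$ segments is still within the prescribed factor $1+\eps$; tune the large-ball volume to the sharp Lemma~\ref{lem:samplebig} threshold with the $(1+\eta)^{1/\tD}$ slack that, once multiplied by $K$, yields the precise $N^{-\eta/\tD}\log^{-1/\tD} N$ decay; and use Lemma~\ref{lem:costtime} to prevent $\tau_k$ from collapsing relative to $c_k$ and thereby rendering the perturbation-ball volumes degenerate. The genuinely new ingredient relative to the driftless analysis is that the perturbation balls are $G[\tau_k]^{-1}$-ellipsoids, anisotropically shrunken in the hard-to-control directions by the controllability-index spectrum of Lemma~\ref{lem:STGA}; it is exactly this anisotropy that replaces the driftless exponent $n$ by $\tD$ in both $C_{\Sigma,\Mfree}$ and the convergence rate.
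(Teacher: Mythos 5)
Your overall architecture matches the paper's proof (small/large concentric perturbation balls along the path, Lemmas~\ref{lem:samplesmall} and \ref{lem:samplebig} for coverage, Lemma~\ref{lem:ball2ball} for per-leg cost and geometric bounds, a union bound with $K=\Theta((N/\log N)^{1/\tD})$), but there is a genuine gap in how you calibrate the tiling. You space the anchors $\x[t_k]$ so that the cost of $\pi$'s \emph{own subarc} is $\Theta(r_N)$, and you key both perturbation balls to the segment duration $\tau_k=t_{k+1}-t_k$, asserting $\tau_k\geq r_N/3$ ``by Lemma~\ref{lem:costtime}.'' That lemma says nothing about $\tau_k$: it lower-bounds the \emph{optimal} 2BVP connection time $\T^*$ between two states in terms of their \emph{optimal} connection cost. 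The reference trajectory $\pi$ is arbitrary and may be locally far from optimal: it can accrue cost $\Theta(r_N)$ in pure control effort over an arbitrarily short time (then $\tau_k\ll r_N$ and your volume bound $\mu[T_k]=\Omega(\eta_b^n r_N^{\tD})$ evaporates, so Lemma~\ref{lem:samplebig} no longer applies at the $\log N/N$ scale), or it can crawl slowly between states whose optimal connection is fast (then $\tau_k\gg\T^*_k$, and membership in $\Delta[\x[t_k],\tau_k,\cdot]$ does \emph{not} imply smallness of the perturbation in the $\Gnorm{\cdot}{\T^*_k}$ norm that Lemma~\ref{lem:ball2ball} requires, since the containment~\eqref{eqn:ballcontain} goes the wrong way). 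Moreover, under subarc-cost spacing the optimal cost $c^*[\x[t_k],\x[t_{k+1}]]$ can be much smaller than $r_N$ (even zero if $\pi$ revisits a state), which inflates the effective $\eta$ in Lemma~\ref{lem:ball2ball} and breaks the disjointness and equal-volume hypotheses needed for Lemma~\ref{lem:samplesmall} when the ball parameters vary with $k$.

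The missing step is to space the anchors by \emph{optimal connection cost}, i.e. $t_k=\min\{t : c^*[\x[t_{k-1}],\x[t]]\geq r_N/2\}$, so that consecutive anchors have $c^*$ exactly $r_N/2$; then Lemma~\ref{lem:costtime} legitimately yields $\T^*_k\geq r_N/6$, and one uses the \emph{fixed} time $r_N/6$ (not $\tau_k$) as the ball parameter, invoking~\eqref{eqn:ballcontain} so that each waypoint is an admissible perturbation for both its incoming and outgoing legs, while Remark~\ref{rem:ballvol} still gives volume $\Omega(r_N^{\tD})$ uniformly in $k$. With that repair (and noting $\sum_k c^*[\x[t_{k-1}],\x[t_k]]\leq c[\pi]$ so the per-leg inflations sum to at most $(1+\eps)c[\pi]$), your cost accounting, the radius choice hitting the $(1+\eta)\log N/N$ threshold, and the final bound $KN^{-(1+\eta)/\tD}+2e^{-\alpha K}=O(N^{-\eta/\tD}\log^{-1/\tD}N)$ coincide with the paper's argument.
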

\begin{proof}
Note that in the case $c[\pi] = 0$ we may pick $\y_0 = \x[0]$ to be the only waypoint and the result is trivial. Therefore assume $c[\pi] > 0$. Make the identification $\alpha = \eps/4$, $\beta = \eps/2$ and fix $N$ sufficiently large so that $r_N/6 \leq \T_\mu$ and also:
\begin{equation}\label{eqn:smallugly}
\log N \geq \beta^{-n}\tD (2-\log(\alpha))e^2/(1+\eta).
\end{equation}
Take $\x[t_k]$ to be points spaced along $\pi$ at cost intervals $r_N/2$; more precisely let $t_0 = 0$, and for $k=1,2,\dots$ consider
\[
t_k \!=\! \min\left\{t \in (t_k, T) \mid c^*[\x[t_{k-1}], \x[t]] \geq r_N/2\right\}.
\]
Let $K$ be the first $k$ for which the set is empty; take $t_K = T$. Note that by construction, we have $K \leq \lceil 2c[\pi]/r_N \rceil$.

We consider the sets $T_k = \Delta[\x[t_k],r_N/6,(1/6)\sqrt{r_N/2}]$ and $S_k = \Delta[\x[t_k],r_N/6,(\beta/6) \sqrt{r_N/2}]$. In particular the time $r_N/6$ here is chosen so that, by Lemma~\ref{lem:costtime}, the optimal connection times between the $\x_k$ satisfy $\T^*[\x[t_{k-1}], \x[t_k]] \geq c^*[\x[t_{k-1}], \x[t_k]]/3 = r_N/6$. Applying the ball containment property \eqref{eqn:ballcontain} this means that for any such $\T^*$, $\Delta[\x[t_k],r_N/6,\rho] \subset \Delta[\x[t_k],\T^*,\rho]$ for $\rho = \sqrt{r_N/2}$ or $(\beta/4) \sqrt{r_N/2}$. From Remark~\ref{rem:ballvol} and our choice of $r_N$ we have the volume bound
\begin{equation}\label{eqn:bigvol}
\mu(T_k) \geq \mu[\Mfree] \left(\frac{1+\eta}{\tD}\right)\left(\frac{\log N}{N}\right),
\end{equation}
and similarly
\begin{equation}\label{eqn:smallvol}
\mu(S_k) \geq \beta^n \mu[\Mfree] \left(\frac{1+\eta}{\tD}\right)\left(\frac{\log N}{N}\right),
\end{equation}
for each $k$. Combining equation~\eqref{eqn:bigvol} and Lemma~\ref{lem:samplebig}, we have that the probability that there exists a $T_k$ that does not contain a sample point (i.e. $T_k \cap V = \emptyset$) is bounded as:
\begin{equation*}
\p{\bigvee_{m=0}^K \{T_k \cap V = \emptyset\}} \leq K N^{-(1+\eta)/\tD}.
\end{equation*}
We note that the $S_k$ are disjoint (as long as $\eps < 2\sqrt{6}$) since $c^*[\x[t_{k-1}], \x[t_k]] = r_N/2$ implies $\Gnorm{\x[t_{k-1}] - \x[t_k]]}{r_N/6} \geq \sqrt{r_N/3}$. Then we may combine equations~\eqref{eqn:smallugly} and \eqref{eqn:smallvol}, which together imply that the $S_k$ satisfy the condition of Lemma~\ref{lem:samplesmall}, to see that the probability that more than an $\alpha$ fraction of the $S_k$ do not contain a sample point is bounded as:
\begin{equation*}
\p{\#\{k \in \{0,\dots,K\} : S_k \cap V = \emptyset\} \geq \alpha K} \leq 2e^{-\alpha  K}.
\end{equation*}
Now, as long as neither of these possibilities holds (i.e. if every $T_k$ and at least a $(1-\alpha)$ fraction of the $S_k$ contains a point of $V$), we will show that the existence of suitable waypoints $\{\y_k\}_{k=0}^K \subset V$ is guaranteed. In that case then we may union bound the probability  of failure:
\begin{align*}
1 - \p{E_N} &\leq K N^{-(1+\eta)/\tD} + 2e^{-\alpha  K}\\
&= O\left[N^{-\eta/\tD}\log^{-1/\tD} N\right],
\end{align*}
as $\Nti$ (the first term dominates asymptotically), where we have used the fact that $K \leq \lceil 2c[\pi]/r_N \rceil = O[(N/\log N)^{1/\tD}].$

Suppose that every $T_k$ and at least a $(1-\alpha)$ fraction of the $S_k$ contains a point of $V$. Choose points $\{\y_k\} \subset V$ accordingly: within $S_k$ if possible, and within $T_k$ otherwise. We may apply Lemma~\ref{lem:ball2ball} to verify that these points $(\eps, r_N, p_N)$-trace $\x$. For $\y_k \in S_k, \y_{k+1} \in S_{k+1}$ we have:
\begin{align*}
c^*[\y_{k-1}, \y_k] &\leq (1+\beta) c^*[\x[t_{k-1}], \x[t_k]]\\
                    &= (1+\eps/2)(r_N/2).
\end{align*}
Since all but a $2\alpha = \eps/2$ fraction of successive points $\y_{k-1}, \y_k$ must both be in $S$ sets and obey the above cost bound, and the remaining pairs satisfy the analogous bound for $T$ sets (with $2$ instead of $(1+\eps/2)$), the total cost of $\pi^*[\y_0,\dots,\y_K] = (\y, \v, T_\sigma)$ is bounded above by $(1+\eps)c[\pi]$. We also have $c^*[\y_{k-1}, \y_k] \leq r_N$ for all $k$. The maximum Euclidean distance from any point of $\y$ (say, on the segment $\pi^*[\y_k, \y_{k+1}]$) to $\x$ is bounded above by its distance to $\x[t_k]$, which by Lemma~\ref{lem:ball2ball} is $O[r_N(\|\x_0\| + 1)] = O[r_N]$ as $N \to \infty$ since $\|\x[t]\|$ achieves some fixed maximum over $[0,T]$.
\end{proof}

\section{ \DFMT Algorithm}\label{sec:algo}

The algorithm presented here is based on \FMT, from the recent work of \cite{LJ-ES-AC-ea:15}, which can be thought of as an accelerated version of \PRM \cite{SK-EF:11}. Briefly, \PRM first samples all the vertices, then constructs a fully \emph{locally} connected graph, and then performs shortest path search (e.g., Dijkstra's algorithm) on the graph to obtain a solution. \FMT also samples all vertices first, but instead of a graph, lazily builds a tree \emph{via dynamic programming}  that very closely approximates the shortest-path tree for \PRM, but saves a multiplicative factor of $O[\log(n)]$ collision-checks by not constructing the full graph. The algorithm given by Algorithm \ref{fmtalg}, \DFMT, is not fundamentally different from the original \FMT algorithm, but mainly changes what ``local" means under differential constraints (similar to \cite{ES-LJ-MP:15a}, but now with drift). One more difference of \DFMT presented here, even from the algorithm in \cite{ES-LJ-MP:15a}, is that the edges are now directed, reflecting the fundamental asymmetry of differential constraints with drift.

Specifically, define the fixed-time forward-reachable and backwards-reachable sets respectively:
\begin{align*}
R^+[\x, r] &= \{\x' \in \M \mid c^*[\x, \x'] < r\} \\
R^-[\x, r] &= \{\x' \in \M \mid c^*[\x', \x] < r\}.
\end{align*}
Membership in either reachable set may be checked by minimizing the explicit cost function~\eqref{eqn:optc} over travel time.
The set of samples to check for membership may be pruned by considering the form of $G[t]$, as suggested in \cite{DW-JvdB:13}. Let $\texttt{CollisionFree}[\x_1, \x_2]$ denote the boolean function which returns true if and only if $\pi^*[\x_1, \x_2]$ lies within $\Mfree$. Given a set of vertices $V$, a state $\x \in \M$, and a cost threshold $r > 0$, let $\texttt{Near}^\pm[V, \x, r] = V \cap R^\pm[\x, r]$. Let $(\x_1,\x_2)$ denote the directed edge corresponding to $\pi^*[\x_1, \x_2]$ with edge weight $c^*[\x_1, \x_2]$. Given a directed graph $G = (V, E)$, where $V$ is the vertex set and $E$ is the edge set, and a vertex $\x \in V$, let $\texttt{Cost}[\x, G]$ be the function that returns the cost of the shortest (directed) path in the graph $G$ between the vertices $\xinit$ and $\x$. Let $\texttt{Path}[\x, G]$ be the function that returns the path achieving that cost. The \DFMT algorithm is given in Algorithm \ref{fmtalg}. The algorithm uses two mutually exclusive sets, namely  $H$ and $W$. The \emph{unexplored} set $W$ stores all samples in the sample set $V$ that have not yet been considered for addition to the tree of paths. The \emph{wavefront} set $H$, on the other hand, tracks in sorted order (by cost from the root) only those nodes which have already been added to the tree that are near enough to tree leaves to actually form better connections. A detailed description of the algorithm would parallel the one provided in \cite{LJ-ES-AC-ea:15} and is omitted due to space limitations, we refer the interested reader to  \cite{LJ-ES-AC-ea:15}. An extension of \PRM, which we denote by \DPRM, may also be defined in a straightforward manner as in \cite{ES-LJ-MP:15a}, although we omit the full description here. Briefly, \DPRM searches the graph of all local collision-free connections that appear in any $\texttt{Near}$ set (as opposed to the tree subgraph constructed by \DFMT) for the least cost trajectory.

\begin{algorithm}[!ht]
\caption{Differential Fast Marching Tree (\DFMT)}
\label{fmtalg}
\algsetup{linenodelimiter=}
\begin{algorithmic}[1]
\STATE $V \leftarrow \{\xinit\} \cup \texttt{SampleFree}[N]$; $E \leftarrow \emptyset$
\STATE $W \leftarrow V \setminus \{\xinit\}$; $H \leftarrow \{\xinit\}$; $\z \leftarrow \xinit$
\WHILE{$\z \notin \Mgoal$}
\STATE $H_{\text{new}} \leftarrow \emptyset$ 
\STATE $X_{\text{near}} = \texttt{Near}^+[V \setminus \{\z\}, \z, r_N] \cap W$
\FOR{$\x \in X_{\text{near}}$}
\STATE $Y_{\text{near}} \leftarrow \texttt{Near}^-[V \setminus \{\x\}, \x, r_N] \cap H$ \label{line:intersect}
\STATE $\y_{\text{min}} \leftarrow \arg\min_{\y \in Y_{\text{near}}}\{\texttt{Cost}[\y, T = (V, E)] \!+\! c^*[\y,\x]\}$
\IF{$\texttt{CollisionFree}[\y_{\text{min}}, \x]$} \label{line:locon}
\STATE $E \leftarrow E \cup \{(\y_{\text{min}}, \x)\}$
\STATE $H_{\text{new}} \leftarrow H_{\text{new}} \cup \{\x\}$ \label{alg:H_1}
\STATE $W \leftarrow W \setminus \{\x\}$
\ENDIF
\ENDFOR
\STATE $H \leftarrow (H \cup H_{\text{new}}) \setminus \{\z\}$ \label{alg:H_2} 
\IF{$H = \emptyset$}
\RETURN \texttt{Algorithm Failure}
\ENDIF
\STATE $\z \leftarrow \arg\min_{\y \in H}\texttt{Cost}[\y, T = (V, E)]$
\ENDWHILE
\RETURN $\texttt{Path}[\z, T = (V, E)]$
\end{algorithmic}
\end{algorithm}

\vspace{-.1cm}
\section{Asymptotic Optimality of  \DFMT}\label{sec:AO}
In this section, we state the asymptotic optimality of \DFMT, for which the asymptotic optimality of \DPRM is a corollary. We note that in contrast to the work required to establish probabilistic exhaustivity for this class of differentially constrained systems and cost functions, the argument that \DFMT recovers paths at least as good as any waypoint-traced trajectory is essentially equivalent to the proofs presented in \cite{ES-LJ-MP:15a} and \cite{LJ-ES-AC-ea:15}. That is, the idea that \DFMT (or \DPRM) can connect closely spaced sample points at a resolution sufficiently fine that every connection takes place away from the influence of the obstacle set is not a feature specific to the LQDMP problem. Thus we state the following theorem and, in the interest of brevity, refer the reader to the proofs of Theorems VI.1 and VI.2 presented in \cite{ES-LJ-MP:15a}. We note that the following optimality result for \DFMT also provides a \emph{convergence rate} bound, but to avoid confusion we emphasize that this bound is given in terms of sample size $N$. For a discussion of how sample size relates to run time for \FMT-style algorithms see \cite{LJ-ES-AC-ea:15}.

\begin{theorem}[\DFMT asymptotic optimality]\label{thm:DFMTAO}Let $(\Sigma, \Mfree, \xinit, \Mgoal)$ be a trajectory planning problem satisfying the assumptions $A_\Sigma$ and with $\Mgoal$ $\xi$-regular, such that there exists an optimal path $\pi^*$ with weak $\delta$-clearance for some $\delta > 0$. Let $c_N$ denote the cost of the path returned by \DFMT with $N$ vertices using the cost threshold
\[
r_N\!=\!(1+\eta)^{1/\tD} C_{\Sigma, \Mfree} \left(\frac{\log N}{N}\right)^{1/\tD}\!,
\]
where $\tD = (n+D)/2$, $C_{\Sigma, \Mfree} = \left(C_\mu^{-1} \tD^{-1}6^{n+D/2}2^{n/2}\right.\\\left.\mu[\Mfree]\right)^{1/\tD}$, and $\eta \geq 0$ is an implementation-specific parameter. Then for fixed $\eps > 0$,
\[
\p{c_N > (1+\eps)c(\pi)} = O\left(N^{-\eta/\tD}\log^{-1/\tD} N\right).
\]
\end{theorem}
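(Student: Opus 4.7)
The plan is to lean heavily on the probabilistic exhaustivity result (Theorem~\ref{thm:pathtracing}) and then argue that, with high probability, \DFMT actually finds a path at least as cheap as the one traced out by the randomly sampled waypoints. First, I would reduce from the weak-clearance hypothesis to the strong-clearance hypothesis by invoking the homotopy $\psi$ guaranteed by the definition of weak $\delta$-clearance: for any $\eps'>0$ there exists $\alpha\in(0,1]$ such that $\psi[\alpha]$ is a strong $\delta_\alpha$-clear dynamically feasible trajectory with $c[\psi[\alpha]] \leq (1+\eps')c[\pi^*]$. Applying Theorem~\ref{thm:pathtracing} to $\psi[\alpha]$ with tracing parameter $\eps'$ produces, with probability $1 - O(N^{-\eta/\tD}\log^{-1/\tD}N)$, waypoints $\{\y_k\}_{k=0}^K\subset V$ that $(\eps', r_N, p_N)$-trace $\psi[\alpha]$. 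Because $p_N = C_p r_N \to 0$ as $N\to\infty$ and $\psi[\alpha]$ has clearance $\delta_\alpha$, each concatenated segment $\pi^*[\y_k,\y_{k+1}]$ lies entirely in $\Mfree$ for $N$ sufficiently large, and so the traced polyline $\pi^*[\y_0,\ldots,\y_K]$ is feasible with cost at most $(1+\eps')^2 c[\pi^*]$.

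Second, I would show that \DFMT returns a path with cost no worse than $c[\pi^*[\y_0,\ldots,\y_K]]$. As the authors note, this step parallels the inductive arguments in \cite{ES-LJ-MP:15a,LJ-ES-AC-ea:15}: one shows by induction on $k$ that at the moment \DFMT first pops a vertex whose cost-to-come exceeds $\texttt{Cost}[\y_k, T]$, the tree $T$ already contains $\y_k$ with cost-to-come bounded by $\sum_{j<k} c^*[\y_j,\y_{j+1}]$. The inductive step hinges on two facts: (i) consecutive waypoints satisfy $c^*[\y_k,\y_{k+1}] \leq r_N$ so $\y_{k+1} \in \texttt{Near}^+[V,\y_k,r_N]$ and $\y_k\in\texttt{Near}^-[V,\y_{k+1},r_N]$, ensuring the directed edge is considered by the algorithm, and (ii) when that edge is considered the algorithm performs a \texttt{CollisionFree} check that succeeds because $\pi^*[\y_k,\y_{k+1}]$ has geometric extent $O(r_N)$ around the strong-$\delta_\alpha$-clear reference (by Lemma~\ref{lem:ball2ball}) and therefore lies in $\Mfree$ for large $N$. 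The greedy wavefront management of \DFMT may cause it to connect $\y_{k+1}$ via a cheaper parent than $\y_k$, but in any case its cost-to-come can only be smaller.

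Third, I would close out the goal region argument using $\xi$-regularity of $\Mgoal$: near the endpoint of $\psi[\alpha]$ the tracing waypoint $\y_K$ (or a nearby sample) lies in $\Mgoal$ with probability tending to $1$ at the same asymptotic rate, because the ball $B[\z,\xi]\subset\Mgoal$ guaranteed by regularity has volume $\Omega(1)$ and overlaps the perturbation ball we use in Theorem~\ref{thm:pathtracing} for a range of sample densities well beyond those where the exhaustivity probability bound already holds. A union bound over the exhaustivity failure event and the goal-coverage failure event preserves the $O(N^{-\eta/\tD}\log^{-1/\tD}N)$ rate. Taking $\eps'$ small enough (relative to $\eps$) so that $(1+\eps')^2\leq 1+\eps$ and pushing $\alpha\to 0$ a priori to obtain a strong-clear reference of cost within $(1+\eps')c[\pi^*]$ yields the claimed bound.

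The main obstacle I expect is not the high-probability machinery (that is handled by Theorem~\ref{thm:pathtracing}) but rather the careful verification that the asymmetric, drift-aware \texttt{Near}$^{\pm}$ bookkeeping in Algorithm~\ref{fmtalg} does not break the standard \FMT induction. In particular, I would need to check that the wavefront $H$ contains $\y_k$ at the moment $\y_{k+1}$ is dequeued from $W$, which uses the directed cost bound $c^*[\y_k,\y_{k+1}]\leq r_N$ together with the fact that $\y_{k+1}$ is not popped before its correct cost-to-come is realized; this is the same argument used in \cite{LJ-ES-AC-ea:15} but must be transcribed with directed edges. Once that is done, the convergence rate is inherited verbatim from Theorem~\ref{thm:pathtracing}.
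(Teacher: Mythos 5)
Your proposal is correct and takes essentially the same approach as the paper: the paper proves this theorem by combining probabilistic exhaustivity (Theorem~\ref{thm:pathtracing}) with the waypoint-recovery argument it explicitly defers to Theorems VI.1 and VI.2 of \cite{ES-LJ-MP:15a} (weak-to-strong clearance reduction, collision-free local connections via the $p_N$ extent bound, goal $\xi$-regularity, and the \FMT-style induction), which is exactly what you sketch, transcribed for the directed $\texttt{Near}^{\pm}$ bookkeeping. There is no substantive gap beyond the details the paper itself delegates to the cited reference.
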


\section{Numerical Experiments}\label{sec:sims}
\begin{figure}[!htbp]
 \centering
 \vspace{3mm}
 \subfigure{\includegraphics[width=0.3\textwidth]{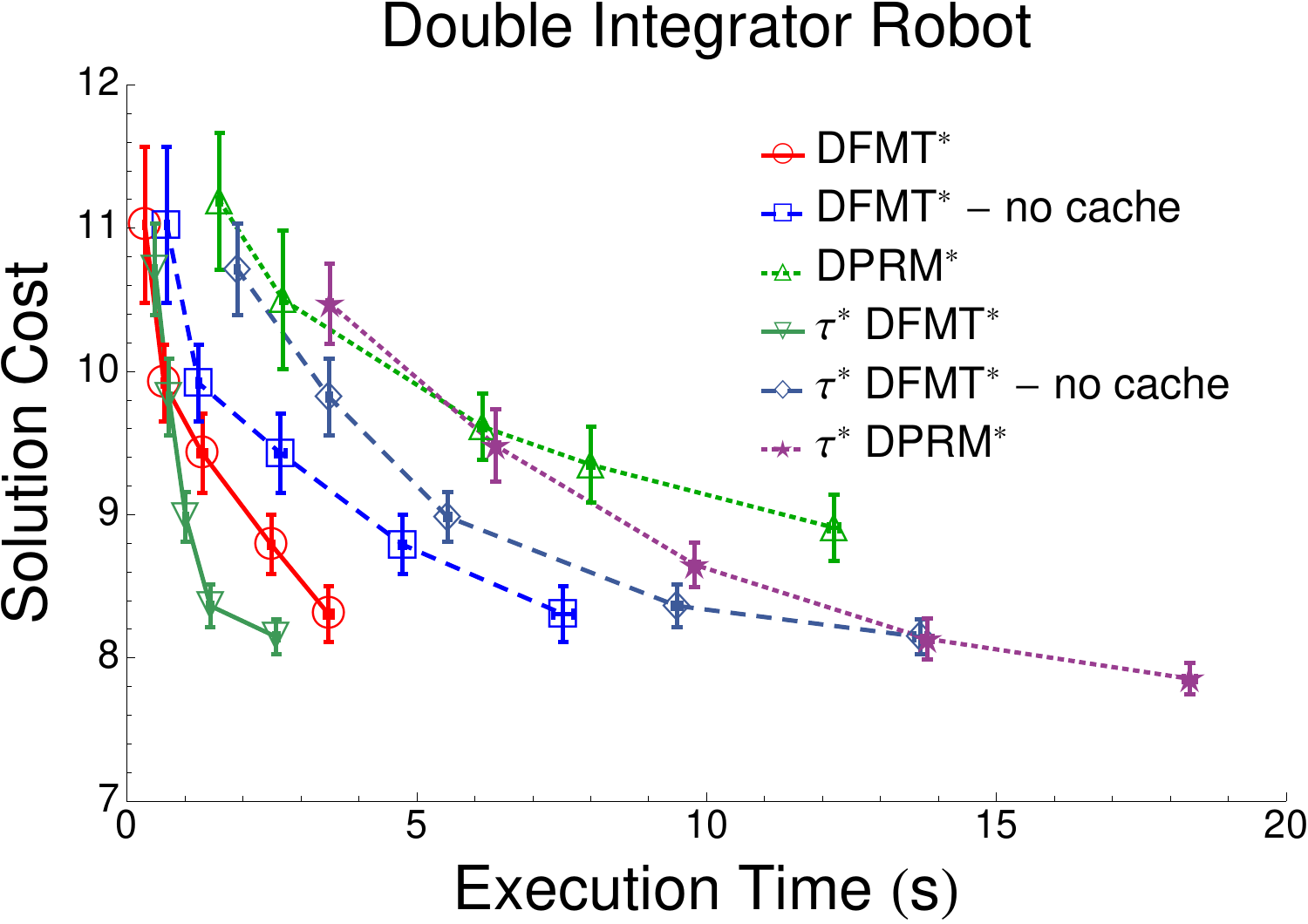}}
 \subfigure{\includegraphics[width=0.3\textwidth]{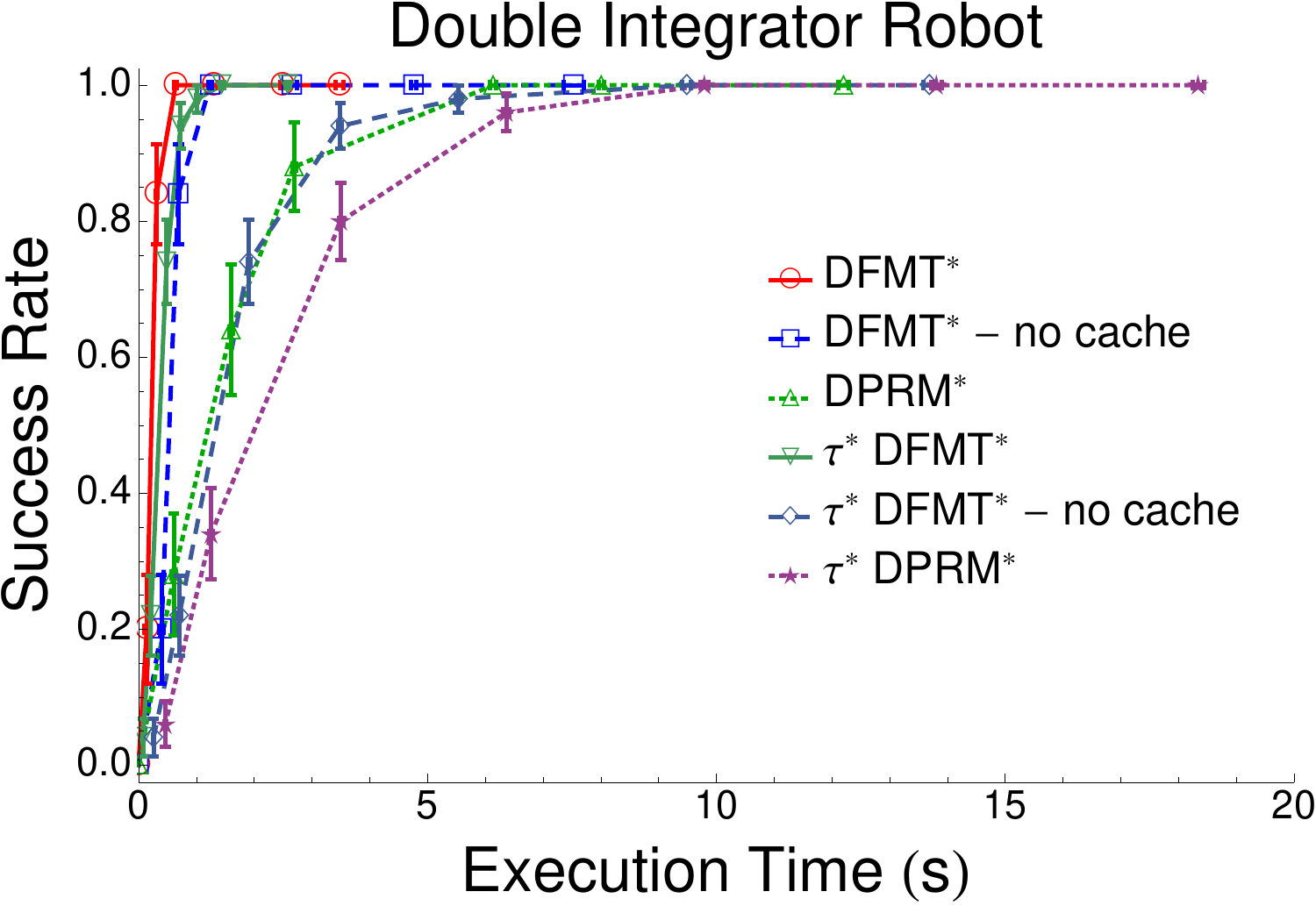}} \vspace{-3mm}
 \subfigure{\includegraphics[width=0.3\textwidth]{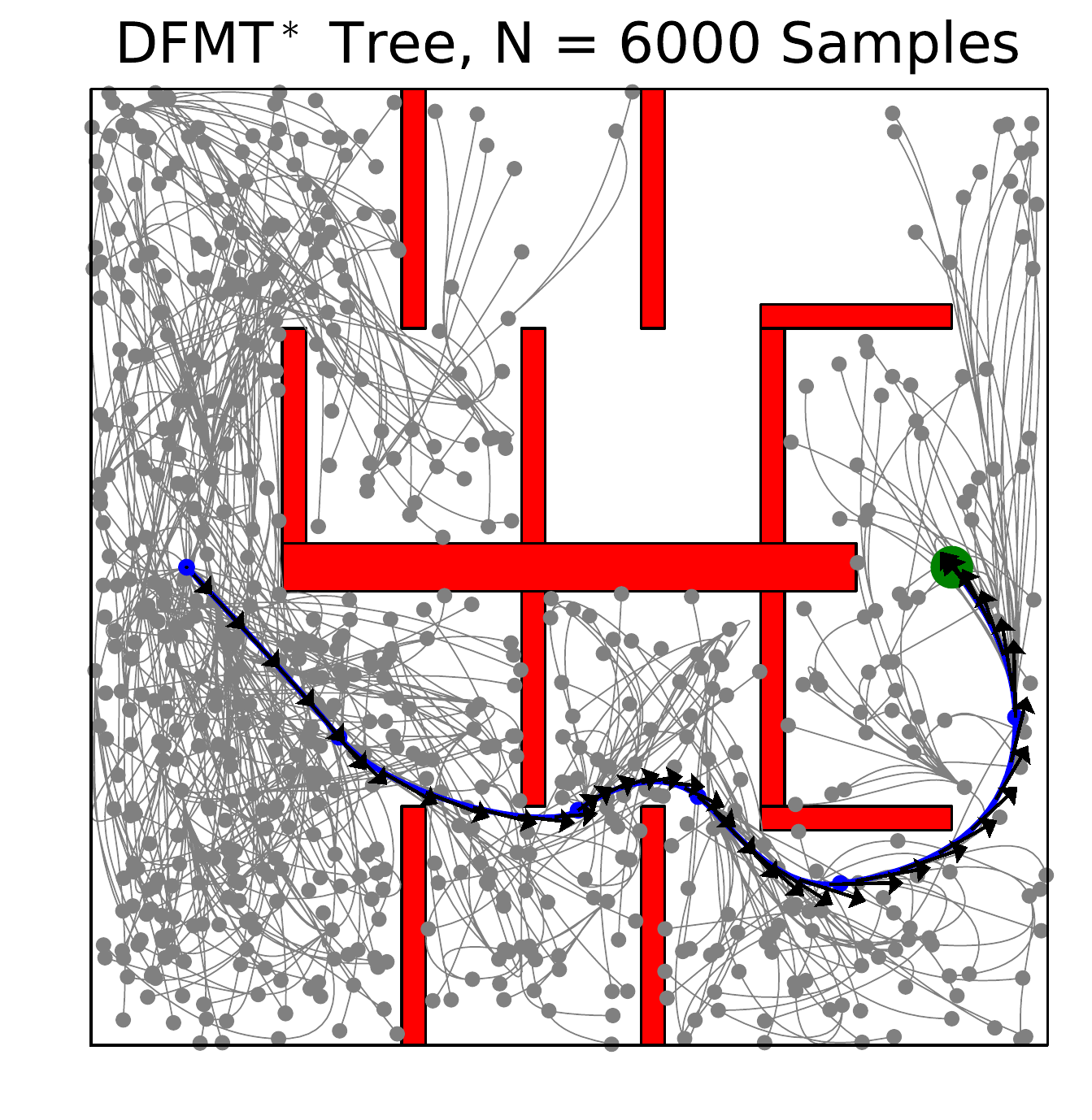}} 
 \caption{Top: Simulation results for the double integrator system with a maze obstacle set. The error bars in each axis represent plus and minus one standard error of the mean for a fixed sample size $n$.
 Bottom: Example \DFMT tree for $n=6000$. The position of the the feasible trajectory returned is highlighted in blue with velocity denoted by arrows.} \vspace{-5mm}
\label{fig:all}
\end{figure}
The \DFMT and \DPRM algorithms were implemented in Julia and run using a Unix operating system with a 2.0 GHz processor and 8 GB of RAM.
We tested \DFMT and \DPRM on the double integrator system, a standard LQDMP formulation as studied in \cite{DW-JvdB:13}. We also implemented variants of \DFMT and \DPRM where the local connection cost is computed with respect to a fixed time $\T$, instead of optimizing $c^*[\T]$ over all arrival times. This is less computationally intensive than searching for the optimal $\T^*$, and can be proven asymptotically optimal as well using a similar probabilistic exhaustivity approach. The intuition is that for a fixed cost radius, the algorithm is searching over a ``donut''  -- the band of states in which the fixed $\T$ is a valid approximation for $\T^*$ -- instead of a ``ball.'' Provided the sample count $N$ is sufficiently large, this donut has enough volume to support the desired tracing connections. We denote the optimal-connection algorithms presented in this paper $\T^*$ \DFMT and $\T^*$ \DPRM to differentiate the two approaches. The simulation results are summarized in Figure~\ref{fig:all}. A maze was used for $\Mobs$, and our algorithm implementations were run 50 times each on sample sizes up to $N=12000$ for fixed $\T$ and $N=6000$ for the $\T^*$ variants. We plot results for both versions of \DFMT run with and without a cache of near neighbor sets and local connection costs; as discussed in \cite{ES-LJ-MP:15a} this information, which does not depend on the problem-specific obstacle configuration, may be precomputed for batch-processing algorithms such as \DFMT and \DPRM\ -- the price to pay is a moderate increase in memory requirements. We see that the extra time for optimizing over local connection duration $\T$ is significant (\DFMT\ -- no cache vs. $\T^*$ \DFMT\ -- no cache), but may be mitigated by precomputation (\DFMT  vs. $\T^*$ \DFMT).

\vspace{-0.1truecm}
\section{Discussion and Conclusions}\label{sec:disc}
\vspace{-0.1truecm}
In this paper we have provided a thorough and rigorous theoretical framework to assess optimality guarantees of sampling-based algorithms for linear affine systems with a mixed time/energy cost function. In particular, we leveraged the study of small-cost perturbations to show that optimum-approximating waypoints may be found among randomly sampled state sets with high probability. We applied this analysis to design and theoretically validate an asymptotically optimal algorithm, \DFMT, for the LQDMP problem.

Although this work is nominally limited to linear affine drift systems, it not only provides a good model for many real systems, but is a crucial first step towards modelling nonlinear systems as well. Indeed, since \DFMT can be applied to a nonlinear system by linearizing the dynamics, an important next step will be to assess the theoretical guarantees of \DFMT applied to such a linearized approximation. Given suitable smoothness assumptions on the system linearization (sufficiently powerful, but still encompassing a useful class of dynamics), it seems likely that the perturbation analysis and probabilistic exhaustivity may follow identically to that presented in this paper, up to an additional term quantifying the ``perturbation on the perturbation.'' A similar linearization approach has been experimentally validated by Kinodynamic \RRTs in \cite{DW-JvdB:13}. We are also note the similarities in analysis evident between the linear affine drift systems studied in this paper and the (possibly non-linear) control-affine driftless systems studied in \cite{ES-LJ-MP:15a}. In particular, the parallel notions of controllable/bracket-generating systems and controllability index/Hausdorff dimension give hope that a unifying theory for non-linear systems with drift may be achieved.
There are a number of additional directions open for further research. In particular,  we plan to deploy \DFMT on robotic platforms, specifically helicopters and floating platforms emulating the dynamics of spacecraft.  Also,  it is of interest to study a bidirectional version of \DFMT. Finally, it is of interest to devise strategies whereby the radius tuning parameter is self regulating, with the objective of making the algorithm parameter-free.
\vspace{-0.1truecm}

\renewcommand*{\bibfont}{\footnotesize}
\bibliographystyle{IEEEtran}
\bibliography{../../../bib/alias,../../../bib/main}
\newpage
\end{document}